\documentclass[11pt]{article}

\usepackage{psfrag,amsmath,amsbsy,amsfonts,amssymb,amsthm,fullpage}
\usepackage{graphicx}
\usepackage{algorithm,algorithmic,mathtools}
\usepackage{natbib}
\usepackage{color}
\usepackage{tikz}
\usetikzlibrary{calc,shapes}
\usepackage{subfig}
\usepackage{hyperref}
\usepackage{multirow}

\setlength{\parskip}{0.3em}

\newtheorem{theorem}{Theorem}
\newtheorem{lemma}[theorem]{Lemma}

\newcommand{\dintervals}{{\mathcal{I}(d)}}
\newcommand{\dist}{{\mathrm{dist}}}
\newcommand{\interval}{{\mathcal{I}}}

\newcommand{\calC}{{\mathcal{C}}}
\newcommand{\calD}{{\mathcal{D}}}
\newcommand{\calS}{{\mathcal{S}}}
\newcommand{\property}{{\mathcal{P}}}
\newcommand{\D}{{\mathrm{d}}}
\newcommand{\knn}{{\text{$k$-NN}}}
\newcommand{\mv}{{\text{$k$-NN}^{\mathrm{hard}}}}
\newcommand{\thard}{\mathcal{T}^{\mathrm{hard}}}
\newcommand{\tsoft}{\mathcal{T}^{\mathrm{soft}}}
\newcommand{\knnsoft}{{\text{$k$-NN}^{\mathrm{soft}}}}
\newcommand{\loss}{{\mathrm{loss}}}
\newcommand{\calE}{{\mathcal E}}
\newcommand{\udi}{{\mathrm{Int}}}
\newcommand{\pt}{{\mathrm{PT}}}
\newcommand{\tot}{{\mathrm{TT}}}
\newcommand{\da}{{\mathrm{DA}}}
\newcommand{\qu}{_{\mathrm{query}}}
\newcommand{\ac}{_{\mathrm{active}}}

\newcommand{\ap}{{\mathrm{Comp}}}
\newcommand{\aga}{{\mathrm{AGA}}}
\newcommand{\cga}{{\mathrm{CGA}}}
\newcommand{\calU}{{\mathcal{U}}}

\newcommand{\calA}{{\mathcal{A}}}
\newcommand{\error}{{\mathrm{error}}}
\newcommand{\calT}{{\mathcal{T}}}
\newcommand{\err}{{\mathrm{err}_1}}
\newcommand{\calB}{{\mathcal B}}

\title{%Estimating the Accuracy of Learning Algorithms\\ {\small [or]}\\
Active Tolerant Testing%\\ {\small [or]}\\
%Active Tolerant Testing: Estimating the Fit of Complex Hypothesis Classes and the Performance of Learning Algorithms from Few Labeled Examples\\ {\small [or]}\\
%Active Tolerant Testing: Estimating the Performance of Learning Algorithms from Few Labeled Examples\\ {\small [or]}\\
%...
}

\author{
   Avrim Blum\\
   \scriptsize Toyota Technological Institute at Chicago\\
   \scriptsize Chicago, USA\\
   \scriptsize \emph{avrim@ttic.edu}
   \and Lunjia Hu\\
   \scriptsize Institute for Interdisciplinary Information Sciences\\
   \scriptsize Tsinghua University\\
   \scriptsize Beijing, China\\
   \scriptsize \emph{hulj14@mails.tsinghua.edu.cn}
}

\begin{document}
\maketitle

\begin{abstract}
In this work, we give the first algorithms for tolerant testing of nontrivial classes in the active model: estimating the distance of a target function to a hypothesis class $\calC$ with respect to some arbitrary distribution $\calD$, using only a small number of label queries to a polynomial-sized pool of unlabeled examples drawn from $\calD$.   Specifically, we show that for the class $\calC$ of unions of $d$ intervals on the line, we can estimate the error rate of the best hypothesis in the class to an additive error $\epsilon$ from only $O(\frac{1}{\epsilon^6}\log \frac{1}{\epsilon})$ label queries to an unlabeled pool of size $O(\frac{d}{\epsilon^2}\log \frac{1}{\epsilon})$.  The key point here is the number of labels needed is independent of the VC-dimension of the class. This extends the work of  \citet{BBBY12} who solved the {\em non}-tolerant testing problem for this class (distinguishing the zero-error case from the case that the best hypothesis in the class has error greater than $\epsilon$).  

We also consider the related problem of estimating the performance of a given learning algorithm $\calA$ in this setting.  That is, given a large pool of unlabeled examples drawn from distribution $\calD$, can we, from only a few label queries, estimate how well $\calA$ would perform if the entire dataset were labeled?   We focus on $k$-Nearest Neighbor style algorithms, and also show how our results can be applied to the problem of hyperparameter tuning (selecting the best value of $k$ for the given learning problem).
\end{abstract}
\section{Introduction}
Suppose you are about to embark on a project to label a large quantity of data, such as medical images or street scenes.  Your intent is to then feed this data into your favorite learning algorithm for, say, a medical diagnosis or robotic car application.  Before embarking on this project, can you, from just a few labeled examples, estimate {\em how well} your algorithm can be expected to perform when trained on the large sample?  
We consider this question in two related  contexts.  

\paragraph{Tolerant testing:} The first context we consider is that of tolerant testing, or approximating the distance of a target function $f$ to a hypothesis class $\calC$.  Specifically, consider a hypothesis class $\calC$ of VC-dimension $d$, where $d$ should be thought of as large.  If we wish to find the $\epsilon$-approximately-best hypothesis in $\calC$, we will need roughly $O(d/\epsilon)$ labeled examples in the (realizable) case that $f \in \calC$, or $O(d/\epsilon^2)$ labeled examples in the (agnostic) case that $f \not\in \calC$.   However, if we just want to estimate the error rate of the best hypothesis in $\calC$ (rather than to {\em find} the hypothesis), can we do this from less data?

%There are two scenarios in which we want to know the accuracy of a learning algorithm beforehand. The first one is when the algorithm does Empirical Risk Minimization over a hypothesis class, where the bottleneck for accuracy is the best function in the underlying hypothesis class. The second one is when the bottleneck for accuracy is the limited amount of data.

The problem of determining {\em whether} one will be able to learn well using a given hypothesis class $\calC$ using substantially  less labeled data than needed to actually (attempt to) learn well using that class  is the problem of {\em passive} and {\em active} property testing, studied by \citet{KR98} and \citet{BBBY12}.  That work considers the problem of distinguishing the case that (a)  the target function $f$ belongs to class $\calC$ from (b) the target function $f$ is $\epsilon$-far from any concept in $\calC$ with respect to the underlying data distribution $\calD$.   For instance, suppose our data consists of points $x$ on the real line, labeled by $f$ as positive or negative, and we are interested in learning using the class $\calC$ consisting of unions of $d$ intervals.  This class has VC-dimension $2d$ and so would require $\Omega(d)$ labeled examples to learn.  However, \citet{BBBY12} show that in the active testing framework (one can sample $poly(d)$ {\em unlabeled} examples for free and then query for the labels of a small number of those examples), one can solve the testing problem using only a constant number of label queries (when $\epsilon$ is constant), independent of $d$.

One limitation of these results, however, is that they do not guarantee to give a meaningful answer when the target function is ``almost'' in the  class $\calC$.  For instance, suppose $f$ can be perfectly described by a union of 10,000 intervals but is $\epsilon/2$-close to a union of 100 intervals.  Then we would like a tester that can say ``good enough'' at $d=100$ rather than telling us that we need $d=10,000$.  The tester of \citet{BBBY12}, unfortunately, seems to require $f$ to be $O(\epsilon^3)$-close to a union of $d$ intervals in order to guarantee an output of YES, which is much less than $\epsilon$.

In this work, we give algorithms for such {\em tolerant testing} \citep{PRR06} for the case of unions of intervals and a few related classes.  We can distinguish the case that the best function in $\calC$ has error rate $\geq 2\epsilon$ from the case that the best function in $\calC$ has error rate $\leq \epsilon$, and more generally we can estimate the error rate $\alpha$ of the best function in the class up to $\pm \epsilon$.  Thus, for the first time, from a small number of label queries, we can solve the property-testing analog of the notion of agnostic learning.

One point we wish to make up front: while the classes of functions we consider are fairly simple, such as unions of intervals on the line, we are operating in a challenging model.  We would like algorithms that work for any (unknown) underlying data distribution $\calD$, not just the uniform distribution, {\em and} we want algorithms that only query for labels  from among examples seen in a poly-sized sample of unlabeled data drawn from $\calD$ rather than querying arbitrary points in the input space.  These are important conditions for being able to use property testing for machine learning problems.  

\paragraph{Algorithm estimation:}
The second context we consider is that we are given a learning algorithm $\calA$ and a large unlabeled sample $S$ of $N$ examples drawn from distribution $\calD$.  If we were to label all $N$ examples of $S$ and feed them into algorithm $\calA$, then $\calA$ would produce some hypothesis (call it $h_S$) with some error rate $\alpha$.   What we would like to do is, by labeling only very few examples in $S$, and perhaps a few additional examples drawn from $\calD$, to estimate the value of $\alpha$ (so that we can determine whether our project of labeling all examples in $S$ is worthwhile).   

To get a feel for this problem, one algorithm $\calA$ for which this task is easy is 1-Nearest Neighbor (1-NN).  This algorithm would produce a hypothesis $h_S$ that on any given query point $x$ predicts the label of the example $x' \in S$ that is nearest to $x$.  For this algorithm, we can easily estimate the error rate of $h_S$ from just a few label queries by repeatedly drawing a random $x$ from $\calD$, finding the point $x' \in S$ that is closest to $x$, and then requesting the labels of $x$ and $x'$ to see if they agree.   We only need to repeat this process $O(1/\epsilon^2)$ times in order to estimate the error rate of $h_S$ to $\pm \epsilon$.  This works because $h_S$ is constructed, and makes predictions, in a very local way.\footnote{In contrast, note that estimating the error rate of this algorithm would require a large labeled sample if we only \emph{passively} receive labeled examples.  Specifically, suppose the distribution $\calD$ is uniform over $c$ clusters and the 1-NN algorithm aims to use $N=c\log\frac{c}{\delta}$ examples, so that with probability at least $1-\delta$, every cluster has at least one training example in it.  We want to distinguish the following two cases: either every cluster is pure but random so the error rate is roughly 0, or every cluster is 50/50 so the error rate is roughly $\frac{1}{2}$. To distinguish these cases, the tester needs to see at least two labels in the same cluster, implying an $\Omega(\sqrt c)=\Omega(\sqrt {N/\log N})$ sample complexity lower bound.}  In this work, we extend this to different forms of $k$-Nearest Neighbor algorithms, where the prediction on some point $x$ depends on the $k$ nearest examples in $S$, developing testers for which the number of queries {\em does not depend on $k$}.    This then allows us to use this for {\em hyperparameter tuning}: determining the (approximately) best value of $k \in \{1, \ldots, N\}$ for the given application.

%[[regarding the 3 slightly different settings:]]

We note that there are three natural but somewhat different ways to model the task of estimating the error rate of algorithm $\calA$ trained on dataset $S$. Let $\error(h_S)$ denote the error rate of hypothesis $h_S$ with respect to distribution $\calD$, and let $\hat\alpha$ be the output of the tester $\calT$ that estimates $\error(h_S)$. In the first model, we require that $\hat\alpha$ be a good estimate of $\error(h_S)$ with probability at least $\frac23$ for {\em any} training set $S$, even sets $S$ not drawn from $\calD$.  In the second model, we only require that $\calT$ be accurate when $S$ is drawn from $\calD$ (that is, the $\frac23$ probability is over both the internal randomness in $\calT$ and in the draw of $S$).  Finally, in the third model, $S$ is drawn from $\calD$ but $\calT$ does not have access to it: instead, $\calT$ has the ability to draw (a polynomial number of) fresh unlabeled examples and to query points from them.  That is,
%could take $S$ as its input (as in the first two models) or not (as in the third model). There are two types of randomness: the randomness of $S$ (iid from $\calD$) and the internal randomness of the tester $\calT$. 
\begin{enumerate}
\item In the first model, we require that $\forall S,\Pr_{\calT(S)}[|\hat\alpha-\error(h_S)|\leq\epsilon]\geq \frac{2}{3}$. 
\item In the second model, we require that $\Pr_{S,\calT(S)}[|\hat\alpha-\error(h_S)|\leq \epsilon]\geq \frac{2}{3}$. 
\item In the third model, we require that $\Pr_{\calT}[|\hat\alpha-\mathbb E_S[\error(h_S)]|\leq\epsilon]\geq \frac{2}{3}$. 
\end{enumerate}
Roughly, the first model is the hardest while the third model is the easiest.  
All our upper bounds and lower bounds in this paper apply to all three models with slight modifications, though for simplicity of presentation we focus on the second model throughout the paper.%first model for upper bounds (Sections \ref{subsec:knnsoft}, \ref{subsec:bestk}, \ref{subsec:mv}) and the second model for lower bounds (Section \ref{subsec:lower}) because the first model is more restraining than the second model.

%Though in this paper, we present our upper bound and lower bound results in the second model, they can be naturally generalized to all models with slight modifications.
\label{sec:intro}
\section{Our Results}
\label{sec:results}

In this paper, we show (Theorem \ref{thm:main}) that in the active testing model \citep{BBBY12}, there is an algorithm that approximates the distance from a function to the class of unions of $d$ intervals on the line up to an additive error $\epsilon$ using $O(\frac{1}{\epsilon^6}\log\frac{1}{\epsilon})$ label queries on $O(\frac{d}{\epsilon^2}\log\frac{1}{\epsilon})$ unlabeled examples, even when the data distribution is unknown to the algorithm.

To achieve this result, we propose the notion of \emph{compositions of additive properties} (Section \ref{subsec:composition}) and prove the Composition Lemma (Lemma \ref{thm:additive}) that to approximate the distance to any composition of $m$ additive properties on a semi-uniform distribution up to an additive error $\epsilon$, we only need a distance approximation oracle for compositions of only $O(\frac{1}{\epsilon\mu^2}+\frac{1}{\epsilon^2})$ additive properties, though this may produce a bi-criteria approximation that depends on $\mu$. See Section \ref{subsec:composition} for definitions.

The Composition Lemma implies an $(\epsilon,\mu)$-bi-criteria distance approximation algorithm for unions of $d$ intervals on the uniform distribution over $[0,1]$ using $O((\frac{1}{\epsilon^3\mu^3}+\frac{1}{\epsilon^4\mu})\log\frac{1}{\epsilon})$ queries on $O(\frac{d}{\epsilon^2}\log\frac{1}{\epsilon})$ unlabeled examples; in particular, we estimate the error to an additive $\pm \epsilon$ and the number of intervals to a multiplicative factor $1+\mu$. We then show (Lemma \ref{lm:reductiontobicriteria}) how to remove the approximation in number of intervals and get a uni-criterion distance approximation algorithm.

To generalize the result to arbitrary unknown distributions, we show a general relationship between query testing and active testing for arbitrary distributions in Theorems \ref{thm:unlabeled} and \ref{thm:reductiontoquery}, which also improves a previous result in \citep{BBBY12} by showing that the unlabeled sample complexity of non-tolerant property testing for unions of $d$ intervals on arbitrary unknown distributions can be reduced to $O(\frac{d}{\epsilon}\log\frac{1}{\epsilon})$, from $O(\frac{d^2}{\epsilon^6})$ in their original paper.

For the $k$-Nearest Neighbor (\knn) algorithm with soft predictions and $p$th-power loss (the prediction on a point $x$ is the average label of the $k$ nearest examples to $x$ in an unlabeled sample of size $N$, and we use the $p$th-power loss to penalize mistakes) we show in Theorem \ref{thm:pth} that this loss can be estimated up to an additive error $\epsilon$ using $O(\frac{p}{\epsilon^2})$ queries on $N+O(\frac{1}{\epsilon^2})$ unlabeled examples, even when the data distribution is unknown to the tester. The same result also holds for Weighted Nearest Neighbor algorithms, where the prediction on a point $x$ is a weighted average of the labels of all the examples depending on their distances to $x$. For the $O(\frac{p}{\epsilon^2})$ query complexity upper bound, we show a matching lower bound (Theorem \ref{thm:pthpowerlower}). In the case where $k$ is a quantity to be optimized, we show an algorithm that finds an approximately-best choice of $k$ up to an additive error $\epsilon$ using roughly 
%$O(\frac{p^2\log N}{\epsilon^3}(\log\log N+\log p+\log\frac{1}{\epsilon}))$
$O(\frac{p^2\log N}{\epsilon^3})$ queries on roughly $N+O(\frac{p\log N}{\epsilon^3})$ unlabeled examples.  For $\knn$ with hard predictions (the prediction is a strict majority vote over the $k$ nearest neighbors), there is a simple algorithm for approximating its accuracy up to an additive error $\epsilon$ using $O(\frac{k}{\epsilon^2})$ queries on $N+O(\frac{1}{\epsilon^2})$ unlabeled examples. By a reduction (Theorem \ref{thm:reductionfrombandit}) from \emph{approximating the number of good arms} ($\aga$, see Section \ref{subsec:arm} for definition) in the stochastic multi-armed bandit setting, we show that the query complexity cannot be improved beyond $O(\frac{k}{\epsilon\log\frac{1}{\epsilon}})$ (Theorem \ref{thm:mv}), and cannot even be improved beyond the current $O(\frac{k}{\epsilon^2})$ upper bound if we assume the natural algorithm for $\aga$ is optimal in query complexity.

%In Section \ref{sec:preli}, we introduce important definitions and related previous results. In Section \ref{sec:additive}, we prove the Composition Lemma (Lemma \ref{thm:additive}). In Section \ref{sec:interval}, we prove Theorem \ref{thm:main} of active tolerant testing for unions of intervals. In Section \ref{sec:neighbor}, we consider problems of estimating the performance of $\knn$. In Appendix \ref{sec:union}, we extend the result of testing disjoint unions of properties from non-tolerant property testing \citep{BBBY12} to tolerant testing.

\section{Preliminaries and Related Work}
\label{sec:preli}
\subsection{Property Testing, Tolerant Testing and Distance Approximation}
Suppose we have a ground set $X$ and a distribution $\calD$ over $X$. For any two binary functions $f,g\in\{0,1\}^X$, we define their distance to be $\dist_{\calD}(f,g)=\Pr_{x\sim \calD}[f(x)\neq g(x)]$. 

Suppose we also have a concept class $\calC\subseteq\{0,1\}^X$. Given a function $f\in\{0,1\}^X$ and a margin $\epsilon$ as input, the task of property testing $\pt_{\calD}(f,\epsilon)$ \citep{RS96} is to distinguish the case that $f$ belongs to class $\calC$ from the case that $f$ is $\epsilon$-far from $\calC$. In other words, $\forall f$,
\begin{enumerate}
\item if $f\in\calC$, the algorithm outputs ``YES'' with probability at least $\frac{2}{3}$;
\item if $\forall g\in\calC,\dist_{\calD}(f,g)>\epsilon$, the algorithm outputs ``NO'' with probability at least $\frac{2}{3}$.
\end{enumerate}
%A property testing algorithm may be randomized, and in this case, the goal is to output the correct answer with probability at least $\frac{2}{3}$. The success probability can be boosted to $1-\delta$ by repeating the algorithm for $O(\log\frac{1}{\delta})$ times and taking the majority.

The function $f$ can be given to the algorithm in many different ways. In the \emph{query testing} framework \citep{RS96}, the algorithm can query the value of $f(x)$ for any $x\in X$. In this framework, we say the algorithm has \emph{query access} to $f$, or has access to $f\qu$. The query complexity of the algorithm, as a function of $\frac{1}{\epsilon}$, is measured by the maximum number of queries needed by the algorithm.

\citet{BBBY12} argued that the query testing framework is not realistic for machine learning practice. They proposed the \emph{active testing} framework, in which the algorithm first requests $N$ unlabeled examples $x_1,x_2,\cdots,x_{N}\in X$ sampled independently according to $\calD$ and can only choose to query $f(x_i)$ for $1\leq i\leq N$. In this framework, we say the algorithm has \emph{active access} to $f$, or has access to $f\ac$. The maximum value of $N$, as a function of $\frac{1}{\epsilon}$, is called the \emph{unlabeled sample complexity}. The query complexity is still defined as a function of $\frac{1}{\epsilon}$ measuring the maximum number of queries needed by the algorithm.

\citet{GGR98} and \citet{KR98} studied an even more strict way of accessing $f$, called \emph{passive access}, in which the algorithm is given the label of an example chosen independently at random from $\calD$ for each query the algorithm makes. 

Tolerant testing $\tot_{\calD}(f,\alpha,\epsilon)$ \citep{PRR06} is a similar task to property testing. The only difference is that besides the margin $\epsilon$, we are given another parameter $\alpha$ as input, and we are asked to distinguish the case that $f$ is $\alpha$-close to $\calC$ from the case that $f$ is $(\alpha+\epsilon)$-far from $\calC$. %following two cases:
%\begin{enumerate}
%\item $\exists g\in\calC,\dist_{\calD}(f,g)\leq \alpha$;
%\item $\forall g\in\calC,\dist_{\calD}(f,g)>\alpha+\epsilon$.
%\end{enumerate} 
The query complexity of tolerant testing is still measured as a function of $\frac{1}{\epsilon}$ \citep{PRR06}. 

A natural generalization of tolerant testing is distance approximation, in which we are only given the function $f$ and the margin $\epsilon$ as input and required to output $\hat\alpha$ as an approximation of the distance from $f$ to $\calC$ up to an additive error $\epsilon$. More specifically, the goal of $\da_{\calD}(f,\epsilon)$ is to output $\hat\alpha$ such that $\forall\alpha$,
\begin{enumerate}
\item $\forall f$ such that $\exists g\in\calC,\dist_{\calD}(f,g)\leq \alpha$, it holds with probability at least $\frac{2}{3}$ that $\hat\alpha\leq \alpha+\epsilon$;
\item $\forall f$ such that $\forall g\in\calC,\dist_{\calD}(f,g)>\alpha$, it holds with probability at least $\frac{2}{3}$ that $\hat\alpha> \alpha-\epsilon$.
\end{enumerate}

%The success probability of a distance approximation algorithm can be boosted to $1-\delta$ by repeating it $O(\log\frac{1}{\delta})$ times and taking the median. 

Because for any $\calD$ and $\epsilon$, it's clear that a $\da_{\calD}(f,\frac{\epsilon}{2})$ algorithm implies a $\tot_{\calD}(f,\alpha,\epsilon)$ algorithm with the same query and unlabeled sample complexity \citep{PRR06}, we focus on distance approximation rather than tolerant testing throughout the paper.

%For all of the above examples, the distribution $\calD$ appears in the subscript meaning that the distribution is fixed and known to the algorithm. In this paper, we will consider a more general case, for example $\pt(f\qu,\epsilon,\calD)$, in which the distribution $\calD$ is arbitrarily given to the algorithm as input. When the algorithm has active or passive access to $f$, it's possible to consider an even more general case, for example $\pt(f\ac,\epsilon)$, in which the distribution $\calD$ is unknown to the algorithm (but implicitly given to the algorithm when accessing $f$).

Obviously, as pointed out by \citet{BBBY12},  a $\pt_{\calD}(f\ac,\epsilon)$ algorithm implies a $\pt_{\calD}(f\qu,\epsilon)$ algorithm with the same query complexity when $\calD$ is known to the algorithm, since it can always then create unlabeled data on its own; this also holds for $\tot$ and $\da$.  Here, we show a theorem in the reverse direction for bounds that are worst-case over distributions $\calD$.  
%that has the reverse direction of the previous statement when $\calD$ is arbitrary rather than fixed. 
Specifically, we show in Theorem \ref{thm:unlabeled} that a $\pt_{\calD}(f\qu,\frac{\epsilon}{2})$ algorithm can induce a $\pt_{\calD}(f\ac,\epsilon)$ algorithm with (except for a constant factor) the same query complexity and reasonable unlabeled sample complexity, under the assumption that the $\pt_{\calD}(f\qu,\frac{\epsilon}{2})$ algorithm never queries examples outside the support of $\calD$, which holds in all normal cases. We extend the theorem to $\da$ in Theorem \ref{thm:reductiontoquery}.

\subsection{Unions of $d$ Intervals}
We use $\interval(d)\subseteq\{0,1\}^{\mathbb{R}}$ to denote the class of functions $f\in\{0,1\}^{\mathbb{R}}$ satisfying that $f^{-1}(1)$ can be written as a union of at most $d$ intervals. Note that for $d\in\mathbb{N}$, the VC-dimension of $\interval(d)$ is $2d$.

We use $\interval_{\calD}(d,\alpha)$ to denote the class of functions that are $\alpha$-close to $\interval(d)$, i.e. $\interval_{\calD}(d,\alpha)=\{f\in\{0,1\}^{\mathbb{R}}:\exists g\in\interval(d),\dist_{\calD}(f,g)\leq\alpha\}$. Using this notation, property testing for unions of $d$ intervals is to distinguish $f\in\interval(d)$ and $f\notin\interval_{\calD}(d,\epsilon)$. %In the rest of the paper, when $\mathcal{D}$ is the uniform distribution on unit interval $[0,1]$, we omit it for short.

In previous work, \citet{KR98} showed that in the query testing framework, when $\calD$ is the uniform distribution over $[0,1]$, there is a bi-criteria testing algorithm that can distinguish $f\in\dintervals$ and $f\notin\mathcal{I}_{\calD}(\frac{d}{\epsilon},\epsilon)$ using $O(\frac{1}{\epsilon})$ queries. %Their algorithm can be easily generalized to a bi-criteria tolerant testing algorithm that distinguishes $f\in\mathcal{I}(d,\alpha_1)$ and $f\notin\mathcal{I}(\frac{d}{O(\alpha_2-\alpha_2^2-\alpha_1)},\alpha_2)$ using $O(\frac{1}{(\alpha_2-\alpha_2^2-\alpha_1)^2})$ queries. Although their algorithm lies in the query testing framework, the queries they use are randomly chosen and can be generated in the active testing framework using $O(\frac{\sqrt{d}}{\epsilon^{1.5}})$ (or $O(\frac{\sqrt{d}}{(\alpha_2-\alpha_2^2-\alpha_1)^{2.5}})$ with tolerance) unlabeled samples. 
\citet{BBBY12} improved this work by showing that in the active testing framework, there is a uni-criterion testing algorithm that can distinguish $f\in\interval(d)$ and $f\notin\mathcal{I}_\calD(d,\epsilon)$ using $O(\frac{1}{\epsilon^4})$ queries on $O(\frac{\sqrt{d}}{\epsilon^5})$ unlabeled examples. Their algorithm can be generalized to a testing algorithm that distinguishes $f\in\mathcal{I}_\calD(d,\epsilon_1)$ and $f\notin\mathcal{I}_\calD(d,\epsilon)$ when $\epsilon_1=O(\epsilon^3)$ using the same number of queries and unlabeled examples. 
%When the distribution is uniform on $[0,1]$, their algorithm can be viewed as an algorithm in the \emph{query testing} framework implemented in the \emph{active testing} framework by continuous sampling until obtaining the data that the algorithm wants to query. 
They generalized the result from uniform distribution on $[0,1]$ to any unknown distribution by taking the advantage of unlabeled examples to approximate the CDF of the distribution to enough accuracy using $O(\frac{d^2}{\epsilon^6})$ unlabeled examples. This unlabeled sample complexity is improved to $O(\frac{d}{\epsilon}\log\frac{1}{\epsilon})$ in our paper (Section \ref{subsec:relationship}).

%Previous works \citep{KR98,BBBY12} on testing union of $d$ intervals reveal close relationship between the query testing framework and the active testing framework, which we will discuss in Section \ref{sec:relation}. 

%there is a tolerant testing algorithm that can distinguish $f\in\interval_{\calD}(d,\alpha)$ and $f\notin\interval_{\calD}(d,\alpha+\epsilon)$ using $O(\frac{1}{\epsilon^6}\log\frac{1}{\epsilon})$ queries on $O(\frac{d}{\epsilon^2}\log\frac{1}{\epsilon})$ unlabeled samples, even when the distribution $\calD$ is unknown to the algorithm.

\subsection{Composition of Additive Properties}
\label{subsec:composition}
\citet{BBBY12} showed that disjoint unions of testable properties are testable in the non-tolerant, active model. We extend their result to distance approximation in Appendix \ref{sec:union}. Here, we propose a more general notion of a certain concept class formed by composing smaller concept classes on disjoint ground sets.

Suppose we have $m$ disjoint ground sets $X_1,X_2,\cdots,X_m$ and on each $X_i$, we have a sequence of concept classes $\calC_i^0, \calC_i^1, \calC_i^2, \cdots\subseteq\{0,1\}^X$. Suppose $\calC_i^0\neq\emptyset$ for all $i$. We use $X$ to denote the disjoint union $\bigcup\limits_{i=1}^m X_i$. For any $d\geq 0$, we define a concept class $\property(d)$ on $X$ to be the class of functions $f\in\{0,1\}^X$ satisfying that $\exists k_1,k_2,\cdots,k_m\in\mathbb{N}$ s.t.
\begin{enumerate}
\item $\sum\limits_{i=1}^mk_i\leq d$;
\item $\forall 1\leq i\leq m, f|_{X_i}\in\calC_i^{k_i}$.
\end{enumerate}

We call $\property$ a \emph{composition of $m$ additive properties}. Note that $\property(0)=\{f\in\{0,1\}^X:\forall 1\leq i\leq m,f|_{X_i}\in\calC_i^0\}$, matching the definition of a disjoint union of properties in \citep{BBBY12}. Also note that $\property(0)\neq \emptyset$ because of the assumption that $\calC_i^0\neq \emptyset$ for all $i$. 

For a given $t\geq 0$, we define a composition $\property^t$ in the same way as $\property$ except that we further require every $k_i$ to be at most $t$, or, $\property^t$ is a composition of $m$ additive properties \emph{truncated by $t$}. %If $\property(0)=\emptyset$, i.e., for some $i$, it holds that $\calC_i^0=\emptyset$, we say the composition is \emph{degenerated}. In this case, if we define the new $\calC_i^{k}$ to be the old $\calC_i^{k+1}$ for $k=0,1,\cdots$, then the new $\property(d)$ becomes the old $\property(d+1)$ for all $d$. By repeating this process, as long as there exists some $d$ such that $\property (d)$ is non-empty, we can finally make $\property(0)$ to be non-empty, i.e., we can make the composition to be \emph{non-degenerated}.

For any distribution $\calD$ over $X$, we use $\property_{\calD}(d,\alpha)$ to denote functions that are $\alpha$-close to $\property(d)$ with respect to $\calD$, i.e. $\property_{\calD}(d,\alpha)=\{f\in\{0,1\}^X:\exists g\in\property(d),\dist_{\calD}(f,g)\leq\alpha\}$. Similarly, we define $\property^t_{\calD}(d,\alpha)=\{f\in\{0,1\}^X:\exists g\in\property^t(d),\dist_{\calD}(f,g)\leq\alpha\}$. We say $\calD$ is \emph{semi-uniform} if $\forall 1\leq i\leq m,\Pr_{x\sim \calD}[x\in X_i]=\frac{1}{m}$.

%\subsection{Composition with Truncation}
%\label{subsec:truncation}
%Suppose $m,X_i,\calC_i^k,X$ are defined as in Section \ref{subsec:composition}. 
%Suppose some $t\geq 0$ is chosen as the \emph{threshold of truncation}. For any $d\geq 0$, we define a concept class $\property^t(d)$ on $X$ to be the class of functions $f\in\{0,1\}^X$ satisfying that $\exists k_1,k_2,\cdots,k_m\in\mathbb{N}$ s.t.
%\begin{enumerate}
%\item $\forall 1\leq i\leq m, k_i\leq t$;
%\item $\sum\limits_{i=1}^mk_i\leq d$;
%\item $\forall 1\leq i\leq m, f|_{X_i}\in\calC_i^{k_i}$.
%\end{enumerate}
%We call $\property^t$ a composition of $m$ additive properties \emph{truncated by $t$}. For any distribution $\calD$ over $X$, we use $\property_{\calD}^t(d,\alpha)$ to denote $\{f\in\{0,1\}^X:\exists g\in\property^t(d)\text{ s.t. }\dist_{\calD}(f,g)\leq\alpha\}$.
\begin{lemma}
\label{lm:truncation}
Suppose the distribution $\calD$ is semi-uniform. We have $\property_{\calD}^t(d,\alpha)\subseteq\property_{\calD}(d,\alpha)\subseteq\property_{\calD}^t(d,\alpha+\frac{d}{tm})$.
\end{lemma}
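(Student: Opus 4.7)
The first inclusion $\property_{\calD}^t(d,\alpha)\subseteq\property_{\calD}(d,\alpha)$ is immediate from the definitions: since requiring each $k_i\leq t$ only shrinks the set of admissible $g$'s, we have $\property^t(d)\subseteq\property(d)$, and any $f$ that is $\alpha$-close to the former is $\alpha$-close to the latter. No hypothesis on $\calD$ is needed for this direction.

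For the second inclusion, the plan is to start from an almost-witness for $f\in\property_{\calD}(d,\alpha)$ and ``patch'' it into a member of $\property^t(d)$ while tracking the cost in distance. Concretely, pick $g\in\property(d)$ with $\dist_{\calD}(f,g)\leq\alpha$, along with the associated nonnegative integers $k_1,\dots,k_m$ satisfying $\sum_i k_i\leq d$ and $g|_{X_i}\in\calC_i^{k_i}$. Let $B=\{i: k_i>t\}$ be the set of ``bad'' coordinates that violate the truncation. For each $i\in B$, use the hypothesis that $\calC_i^0\neq\emptyset$ to pick an arbitrary $h_i\in\calC_i^0$, and define $g'$ by $g'|_{X_i}=h_i$ for $i\in B$ and $g'|_{X_i}=g|_{X_i}$ for $i\notin B$. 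Then the exponents $k_i'$ of $g'$ are $0$ on $B$ and $k_i\leq t$ off of $B$, so each $k_i'\leq t$, and $\sum_i k_i'\leq\sum_i k_i\leq d$, which places $g'\in\property^t(d)$.

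It remains to bound $\dist_{\calD}(f,g')$. By the triangle inequality, $\dist_{\calD}(f,g')\leq \dist_{\calD}(f,g)+\dist_{\calD}(g,g')\leq \alpha+\dist_{\calD}(g,g')$. Since $g$ and $g'$ agree outside $\bigcup_{i\in B}X_i$, we have $\dist_{\calD}(g,g')\leq \sum_{i\in B}\Pr_{x\sim\calD}[x\in X_i]=|B|/m$, where the last equality uses semi-uniformity. The main counting step is to bound $|B|$: every $i\in B$ satisfies $k_i>t$, and these $k_i$'s are part of a sum bounded by $d$, so $t\cdot|B|<\sum_{i\in B}k_i\leq d$, giving $|B|\leq d/t$ (strict, but we only need the non-strict bound). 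Substituting yields $\dist_{\calD}(f,g')\leq \alpha+\tfrac{d}{tm}$, which is exactly what placing $f$ in $\property_{\calD}^t(d,\alpha+\tfrac{d}{tm})$ requires.

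No real obstacle is anticipated: the argument is a one-shot replacement plus a triangle inequality, and the only ingredients used are (i) the nonemptiness hypothesis $\calC_i^0\neq\emptyset$, which lets us actually perform the patch, (ii) the semi-uniformity of $\calD$, which controls the per-block mass, and (iii) the elementary pigeonhole bound $|B|\leq d/t$. If anything deserves double-checking, it is that $g'$ genuinely lies in $\property^t(d)$ rather than in a slightly larger class; this follows because zeroing out the exponents on $B$ does not increase the total budget $\sum_i k_i'$, so the constraint $\sum_i k_i'\leq d$ is preserved automatically.
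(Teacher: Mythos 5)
Your proof is correct and takes essentially the same route as the paper: replace $g$ on the blocks where $k_i>t$ by arbitrary members of $\calC_i^0$, use semi-uniformity to charge each such block $\tfrac{1}{m}$ of distance, and bound the number of such blocks by $\tfrac{d}{t}$ via pigeonhole on $\sum_i k_i\leq d$. Your version merely spells out the triangle inequality and the verification that $g'\in\property^t(d)$ more explicitly than the paper's terse argument.
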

\begin{proof}
$\property_{\calD}^t(d,\alpha)\subseteq\property_{\calD}(d,\alpha)$ is obvious. To see $\property_{\calD}(d,\alpha)\subseteq\property_{\calD}^t(d,\alpha+\frac{d}{tm})$, we note that for any $g\in\property(d)$, for each $i$ such that $k_i>t$, substituting a function in $\calC_i^0$ for $g|_{X_i}$ causes at most a $\frac{1}{m}$ increase in the distance from $f\in\property_\calD(d,\alpha)$ to $g$. An easy observation that $|\{i:k_i>t\}|\leq\frac{d}{t}$ given $\sum\limits_{i=1}^mk_i\leq d$ completes the proof. 
\end{proof}
An $(\epsilon,\mu)$-bi-criteria distance approximation algorithm $\ap_{\calD}(f,(\epsilon,\mu),d)$ for composition $\property$ of additive properties, is an algorithm that takes $f,\epsilon,\mu$ and $d$ as input and outputs $\hat\alpha$ such that $\forall \alpha$
\begin{enumerate}
\item $\forall f\in\property_{\calD}(d,\alpha)$, it holds with probability at least $\frac{2}{3}$ that $\hat\alpha\leq\alpha+\epsilon$;
\item $\forall f\notin\property_{\calD}((1+\mu)d,\alpha)$, it holds with probability at least $\frac{2}{3}$ that $\hat\alpha>\alpha-\epsilon$.
\end{enumerate}
%\subsection{Distance Approximation Oracle}
%Suppose $m,X_i,\calC_i^k,X,t,\widehat\property(d)$ are defined as in Section \ref{subsec:composition} and Section \ref{subsec:truncation}. Suppose we have a sequence of indices $1\leq i_1<i_2<\cdots i_l\leq m$. We use $\mathbf{i}$ to denote $(i_1,i_2,\cdots,i_l)$ for short. For any $d\geq 0$, we define a concept class $\widehat\property^{\mathbf i}(d)$ on $X$ to be the class of functions $f\in\{0,1\}^X$ satisfying that $\exists k_1,k_2,\cdots,k_l\in\mathbb{N}$ s.t.
%\begin{enumerate}
%\item $\forall 1\leq j\leq l,k_j\leq t$;
%\item $\sum\limits_{j=1}^lk_j\leq d$;
%\item $\forall 1\leq j\leq l,f|_{X_{i_j}}\in\calC_{i_j}^{k_j}$.
%\end{enumerate}
%Here, $\widehat\property^{\mathbf i}$ is called the \emph{sub-composition of $l$ additive properties truncated by $t$}. When $l=m$, we have $\widehat\property^{\mathbf i}(d)=\widehat\property(d)$.

%For any distribution $\calD$ over $X$, we use use $\widehat\property_{\calD}^{\mathbf i}(d,\alpha)$ to denote $\{f\in\{0,1\}^X:\exists g\in\widehat\property^{\mathbf i}(d)\text{ s.t. }\dist_{\calD}(f,g)\leq \alpha\}$.

Suppose we have a sequence of indices $1\leq i_1<i_2<\cdots<i_l\leq m$ denoted by $\mathbf i$ for short. Let $\calD_{\mathbf i}$ denote the conditional distribution of $\calD$ on $\bigcup\limits_{j=1}^lX_{i_j}$. A \emph{$(d,l,t,\epsilon)$ distance approximation oracle} is an algorithm taking a length-$l$ sequence $\mathbf i$ of indices and $f\in\{0,1\}^X$ as input, and performing $\ap_{\calD_{\mathbf i}}(f\ac,(\epsilon,0),d)$ on composition $\property^t$. In other words, this algorithm performs distance approximation on any given $l$-sub-union ($l$ is typically small) of the $m$ ground sets. For convenience of use, we require the success probability of the oracle to be at least $\frac{11}{12}$.

%such that $\forall \alpha$,:
%\begin{enumerate}
%\item $\forall f\in\widehat\property^{\mathbf i}_{\calD_{\mathbf i}}(d,\alpha)$, with probability at least $\frac{2}{3}$, it holds that $\hat\alpha\leq\alpha+\epsilon$;
%\item $\forall f\notin\widehat\property^{\mathbf i}_{\calD_{\mathbf i}}(d,\alpha_2)$, with probability at least $$.
%\end{enumerate}

\section{The Composition Lemma}
\label{sec:additive}

%\subsection{Bi-Criteria Distance Approximation}
\begin{lemma}[Composition Lemma]
\label{thm:additive}
Suppose $\property$ is the composition of $m$ additive properties defined in Section \ref{subsec:composition}. Let $\calD$ be a semi-uniform distribution. For parameters $\lambda>0,\alpha\in[0,1]$ and $\mu,\epsilon\in(0,1)$ taken as input, there exists $l=O(\frac{1}{\epsilon\mu^2}+\frac{1}{\epsilon^2})$ such that we have an algorithm that performs $\ap_{\calD}(f\ac,(\epsilon,\mu),\lambda m)$ by calling once a $((1+\frac{\mu}{2})\lambda l,l,\frac{4\lambda}{\epsilon},\frac{\epsilon}{2})$ distance approximation oracle. Suppose the query complexity and the unlabeled sample complexity of the oracle are $q$ and $N$, repectively. Then the query complexity and the unlabeled sample complexity of the algorithm are $q$ and $O(\frac{Nm}{l})$, respectively.

%$((1+\frac{\mu}{2})\lambda l,l,t,\alpha+\frac{\epsilon}{3},\alpha+\frac{\epsilon}{2})$-partial testing using at most $q(l,\lambda,\alpha,\mu,\epsilon)$ queries on at most $U(l,\lambda,\alpha,\mu,\epsilon)$ unlabeled samples with truncation $t=\frac{6\lambda}{\epsilon}$. Then, there is an active testing algorithm that distinguishes $f\in\property_{\calD}(\lambda m,\alpha)$ and $f\notin\property_{\calD}((1+\mu)\lambda m,\alpha+\epsilon)$ using at most $O(q(s,\lambda,\alpha,\mu,\epsilon))$ queries on at most $O(U(s,\lambda,\alpha,\mu,\epsilon)\cdot \frac{m}{s})$ unlabeled samples for $s=O(\frac{1}{\epsilon\mu^2}+\frac{1}{\epsilon^2})$.
\end{lemma}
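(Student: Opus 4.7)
I plan to have the algorithm sample a uniformly random length-$l$ subsequence $\mathbf i=(i_1,\ldots,i_l)$ of $[m]$ with $l=\Theta(1/(\mu^2\epsilon)+1/\epsilon^2)$, draw unlabeled examples from $\calD$ until $N$ of them fall in $X_{\mathbf i}:=\bigcup_j X_{i_j}$ (which by semi-uniformity requires $O(Nm/l)$ draws in expectation, matching the stated unlabeled sample complexity), invoke the $((1+\mu/2)\lambda l,l,4\lambda/\epsilon,\epsilon/2)$-oracle exactly once on $\mathbf i$ and the queried labels of $f$, and output its answer $\hat\alpha$. Set $t=4\lambda/\epsilon$, $a_1=\dist_\calD(f,\property(\lambda m))$, $a_2=\dist_\calD(f,\property((1+\mu)\lambda m))$, and $D(\mathbf i)=\dist_{\calD_{\mathbf i}}(f|_{X_{\mathbf i}},\property^t((1+\mu/2)\lambda l))$. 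Because the oracle succeeds with probability at least $11/12$ and returns $\hat\alpha\in[D(\mathbf i)-\epsilon/2,D(\mathbf i)+\epsilon/2]$ on that event, proving the lemma reduces to showing $a_2-\epsilon/2\leq D(\mathbf i)\leq a_1+\epsilon/2$ with probability at least $3/4$ over $\mathbf i$.

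\textbf{Upper bound on $D(\mathbf i)$.} For this direction I would apply Lemma \ref{lm:truncation} with $d=\lambda m$ to obtain a witness $g\in\property^t(\lambda m)$ whose budget profile $(k_i)_i$ satisfies $\sum_i k_i\leq\lambda m$, $k_i\leq t$, and $\dist_\calD(f,g)\leq a_1+\epsilon/4$. On the random subsequence, Bernstein's inequality applied to $k_{i_1},\ldots,k_{i_l}$---which have mean at most $\lambda$, range contained in $[0,t]$, and variance at most $t\lambda$---forces $\sum_j k_{i_j}\leq(1+\mu/2)\lambda l$ except on an event of probability at most $1/24$ once $l=\Omega(t/(\mu^2\lambda))=\Omega(1/(\mu^2\epsilon))$. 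An ordinary Hoeffding bound on the per-block error probabilities $b_i:=\Pr_{x\sim\calD_{X_i}}[f(x)\neq g(x)]\in[0,1]$ shows that $\dist_{\calD_{\mathbf i}}(f,g|_{X_{\mathbf i}})\leq a_1+\epsilon/2$ except on an event of probability at most $1/24$ once $l=\Omega(1/\epsilon^2)$. Under both events, $g|_{X_{\mathbf i}}$ is itself a valid element of $\property^t((1+\mu/2)\lambda l)$ witnessing $D(\mathbf i)\leq a_1+\epsilon/2$.

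\textbf{Lower bound on $D(\mathbf i)$---the main obstacle.} The delicate direction is $D(\mathbf i)\geq a_2-\epsilon/2$, because restricting to a sub-union with strictly tighter per-index budget $(1+\mu/2)\lambda$ vs.\ the global $(1+\mu)\lambda$ only informally ``makes the problem harder'', and turning this into a high-probability inequality requires care. I would use the Lagrangian structure of the truncated, knapsack-shaped program defining $\property^t$. Let $e_i(k):=\inf_{h\in\calC_i^k}\Pr_{x\sim\calD_{X_i}}[f(x)\neq h(x)]$ and $\phi_i(\tau):=\min_{0\leq k\leq t}(e_i(k)+\tau k)\in[0,1]$. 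Weak duality on the sub-union gives, deterministically in $\calI$, the bound $D(\mathbf i)\geq(1/l)\sum_j\phi_{i_j}(\tau)-\tau(1+\mu/2)\lambda$ for every $\tau\geq 0$. The continuous LP relaxation of the full problem $\alpha^t((1+\mu)\lambda m):=\dist_\calD(f,\property^t((1+\mu)\lambda m))$ is a single-knapsack LP with box constraints, so any basic optimum has at most one fractional coordinate and the integrality gap is $O(1/m)$; strong LP duality therefore produces a dual optimum $\tau^*\geq 0$ with $(1/m)\sum_i\phi_i(\tau^*)=\bar\alpha^t((1+\mu)\lambda m)+\tau^*(1+\mu)\lambda$ where $\bar\alpha^t\geq\alpha^t-O(1/m)\geq a_2-O(1/m)$ (using $\property^t\subseteq\property$). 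Instantiating the weak-dual inequality at this deterministic $\tau^*$ and concentrating $(1/l)\sum_j\phi_{i_j}(\tau^*)$ around its mean by Hoeffding (which needs only $l=\Omega(1/\epsilon^2)$), the two $\tau^*$-terms combine to leave the nonnegative slack $\tau^*(\mu/2)\lambda$, giving $D(\mathbf i)\geq\bar\alpha^t((1+\mu)\lambda m)-\epsilon/4\geq a_2-\epsilon/2$; the tiny-$m$ regime $m<4/\epsilon$ can be handled by taking $l=m$ and invoking the oracle on the entire index set, bypassing the sub-union sampling. Union-bounding the Bernstein event, the two Hoeffding events, and the oracle failure drives the overall success probability above $2/3$, and since the oracle is called exactly once the query complexity equals its $q$. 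The core difficulties I expect to wrestle with are controlling the LP-to-IP integrality gap uniformly in the parameters and verifying that a single deterministic $\tau^*$ is enough for the lower bound, so that no union bound over a net of dual variables is needed and the $l=O(1/(\mu^2\epsilon)+1/\epsilon^2)$ bound in the lemma is preserved.
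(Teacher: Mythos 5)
Your proposal is correct, but for the critical ``NO''/lower-bound direction you take a genuinely different route from the paper, so a comparison is warranted.

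For the ``YES''/upper-bound direction ($D(\mathbf i)\le a_1+\epsilon/2$), your argument is essentially the paper's Lemma~\ref{lm:in}: apply Lemma~\ref{lm:truncation} to get a truncated witness $g$, concentrate the budget sum $\sum_j k_{i_j}$ to at most $(1+\mu/2)\lambda l$ (you use Bernstein, the paper a multiplicative Chernoff bound without replacement; same $l=\Omega(1/(\epsilon\mu^2))$ either way), and concentrate the per-block error of $g$ via Hoeffding. For the ``NO''/lower-bound direction, however, the paper's Lemma~\ref{lm:notin} uses a purely combinatorial \emph{swap} argument: it picks a near-optimal $g\in\property^t((1+\mu)\lambda m)$, pads its budgets $k_i$ up to $k'_i$ summing exactly to $(1+\mu)\lambda m$, shows via Chernoff that $\sum_j k'_{i_j}\ge(1+\mu/2)\lambda l$, and then argues that any $g'$ on the sub-union beating $g$ by more than $\epsilon/4$ could be spliced back into $g$ to produce a member of $\property^t((1+\mu)\lambda m)$ contradicting near-optimality (the padding ensures the spliced function's total budget stays within $(1+\mu)\lambda m$). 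You instead run the Lagrangian: weak duality lower-bounds $D(\mathbf i)$ by $\frac1l\sum_j\phi_{i_j}(\tau)-\tau(1+\mu/2)\lambda$ for every $\tau\ge0$, strong LP duality on the full truncated knapsack fixes a single deterministic $\tau^*$, Hoeffding concentrates $\frac1l\sum_j\phi_{i_j}(\tau^*)$ (this direction needs only $l=\Omega(1/\epsilon^2)$, a bit less than the paper's lemma uses), and the slack $\tau^*(\mu/2)\lambda\ge0$ is discarded. Both arguments are sound; the paper's is more elementary and avoids any LP machinery or integrality-gap discussion, while yours is more systematic and makes the role of the relaxed budget $(1+\mu/2)$ vs.\ $(1+\mu)$ structurally transparent. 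Your integrality-gap claim does hold as stated: at a basic optimum of the convexified knapsack LP at most one coordinate is fractional, and since $e_i\in[0,1]$ rounding that one coordinate costs at most $1/m$, so the IP--LP gap is $O(1/m)$; the only caveat is that ``$m<4/\epsilon$'' is not really a separate regime you need to worry about, since whenever $m<l$ one necessarily sets $l=m$ and invokes the oracle on the whole index set (a degenerate case both proofs implicitly use), and when $m\ge l=\Omega(1/\epsilon^2)$ the $O(1/m)$ gap is already below $\epsilon/4$ with suitable constants. You should also verify that the dual optimum $\tau^*$ is attained (it is, since the dual is a bounded 1-D concave piecewise-linear maximization), but that is routine.
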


\begin{proof}
The algorithm first picks indices $1\leq i_1<i_2<\cdots i_l\leq m$ uniformly at random for $l=O(\frac{1}{\epsilon\mu^2}+\frac{1}{\epsilon^2})$. Then the algorithm asks for $O(\frac{Nm}{l})$ unlabeled examples to make sure with probability at least $\frac{11}{12}$, there are at least $N$ examples lying in $\bigcup\limits_{j=1}^lX_{i_j}$. These examples can be treated as drawn independently at random according to $\calD_{\mathbf i}$, where $\mathbf i=(i_1,i_2,\cdots,i_l)$. Finally, the algorithm calls the oracle to approximate the distance from $f$ to $\property^{t}((1+\frac{\mu}{2})\lambda l)$ truncated by $t=\frac{4\lambda}{\epsilon}$ on distribution $\calD_{\mathbf i}$ up to an additive error $\frac{\epsilon}{2}$ using these unlabeled examples and outputs what the oracle outputs. 

The correctness of the algorithm follows from the following two lemmas and the Union Bound.
\end{proof}

\begin{lemma}
\label{lm:in}
Suppose $t=\frac{4\lambda}{m}$. If $f\in\property_{\calD}(\lambda m,\alpha)$, then choosing $l=O(\frac{1}{\epsilon\mu^2}+\frac{1}{\epsilon^2})$ is enough to make sure that with probability at least $\frac{5}{6}$, $f\in\property^{t}_{\calD_{\mathbf i}}((1+\frac{\mu}{2})\lambda l,\alpha+\frac{\epsilon}{2})$.
\end{lemma}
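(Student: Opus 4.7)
The plan is to combine Lemma~\ref{lm:truncation} (to pass to a truncated witness on all of $X$) with concentration on the random subsample $\mathbf i=(i_1,\dots,i_l)$; getting the $1/(\epsilon\mu^2)$ rate rather than $1/(\epsilon^2\mu^2)$ requires a variance-based concentration bound in place of plain Hoeffding, and that is where the main work lies.

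First I would apply Lemma~\ref{lm:truncation} on the semi-uniform distribution $\calD$ with $d=\lambda m$ and $t=4\lambda/\epsilon$: the lemma inflates the distance by $\lambda/t=\epsilon/4$, so $f\in\property^t_\calD(\lambda m,\alpha+\epsilon/4)$. Fix a witness $g\in\property^t(\lambda m)$ with budget $(k_1,\dots,k_m)$ satisfying $\sum_i k_i\le \lambda m$ and $k_i\le t$. Let $\delta_i=\Pr_{x\sim\calD}[f(x)\ne g(x)\mid x\in X_i]$, so by semi-uniformity $\frac{1}{m}\sum_i\delta_i=\dist_\calD(f,g)\le \alpha+\epsilon/4$. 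It suffices to show that, with probability at least $5/6$ over $\mathbf i$, both
\begin{enumerate}
\item[(a)] $\sum_{j=1}^l k_{i_j}\le(1+\mu/2)\lambda l$ (so that $g$, restricted to $\bigcup_j X_{i_j}$, witnesses membership in $\property^t((1+\mu/2)\lambda l)$, since $k_{i_j}\le t$ is preserved); and
\item[(b)] $\frac{1}{l}\sum_{j=1}^l\delta_{i_j}\le\alpha+\epsilon/2$ (so that $\dist_{\calD_{\mathbf i}}(f,g)\le\alpha+\epsilon/2$)
\end{enumerate}
hold simultaneously.

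Event (b) is the easy one: the $\delta_{i_j}$'s lie in $[0,1]$ with mean at most $\alpha+\epsilon/4$, so Hoeffding's inequality for sampling without replacement (Serfling's bound) gives failure probability $\exp(-\Omega(l\epsilon^2))$, driven below $1/12$ by $l=\Omega(1/\epsilon^2)$.

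Event (a) is the main obstacle. A direct Hoeffding bound based only on the range $[0,t]$ would demand $l=\Omega(t^2/(\mu\lambda)^2)=\Omega(1/(\mu^2\epsilon^2))$, one factor of $\epsilon$ too many. Instead I would exploit the sparsity of the budget: $\mathbb E[k_{i_j}]=\frac{1}{m}\sum_i k_i\le\lambda$, so $\mathrm{Var}(k_{i_j})\le\mathbb E[k_{i_j}^2]\le t\,\mathbb E[k_{i_j}]\le t\lambda$, and the deviation to control is at least $\mu\lambda l/2$. A Bernstein inequality (the sampling-without-replacement form follows from Hoeffding's classical MGF-domination argument) then yields failure probability
\[
\exp\!\Bigl(-\Omega\bigl(\tfrac{(\mu\lambda l)^2}{lt\lambda+t\mu\lambda l}\bigr)\Bigr)=\exp\!\bigl(-\Omega(\mu^2\lambda l/t)\bigr)=\exp\!\bigl(-\Omega(\mu^2\epsilon l)\bigr),
\]
which is below $1/12$ whenever $l=\Omega(1/(\mu^2\epsilon))$. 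Choosing $l=O(\tfrac{1}{\epsilon\mu^2}+\tfrac{1}{\epsilon^2})$ and union-bounding over the failures of (a) and (b) then proves the lemma.
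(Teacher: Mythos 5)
Your proof is correct and follows essentially the same approach as the paper: apply Lemma~\ref{lm:truncation} to pass to a truncated witness $g\in\property^t(\lambda m)$, then show by a union bound that the random subsample simultaneously (a) keeps the budget $\sum_j k_{i_j}$ below $(1+\mu/2)\lambda l$ and (b) keeps the per-piece distance average below $\alpha+\epsilon/2$, with (a) getting the improved $1/(\epsilon\mu^2)$ rate from a variance-aware concentration bound. The paper invokes the Multiplicative Chernoff Bound for $[0,t]$-valued summands with mean $\le\lambda$, which is the same variance exploitation as your explicit Bernstein calculation; the only cosmetic difference is that the paper packages event (a) as the existence of an extension $g'$ (padding the unselected pieces with $\calC_i^0$) rather than speaking of $g$ restricted to $\bigcup_j X_{i_j}$, but these are the same construction.
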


\begin{lemma}
\label{lm:notin}
Suppose $t=\frac{4\lambda}{m}$. If $f\notin\property_{\calD}((1+\mu)\lambda m,\alpha)$, then choosing $l=O(\frac{1}{\epsilon\mu^2}+\frac{1}{\epsilon^2})$ is enough to make sure that with probability at least $\frac{5}{6}$, $f\notin\property^{t}_{\calD_{\mathbf i}}((1+\frac{\mu}{2})\lambda l,\alpha-\frac{\epsilon}{2})$.
\end{lemma}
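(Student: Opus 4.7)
The plan is to prove the contrapositive of Lemma \ref{lm:notin} via Lagrangian weak duality applied to the subsample and Hoeffding concentration at a single, carefully chosen dual parameter. First I would rewrite everything in coordinates: for each $i\in[m]$ and $k\in\{0,1,\ldots,t\}$, let $\phi_i(k):=\dist_{\calD|_{X_i}}(f|_{X_i},\calC_i^k)\in[0,1]$, which we may assume non-increasing in $k$ (WLOG replacing $\calC_i^k$ by $\bigcup_{k'\leq k}\calC_i^{k'}$). Semi-uniformity of $\calD$ makes the distance from $f$ to $\property^t(d)$ exactly $\tfrac{1}{m}\min\{\sum_i\phi_i(k_i):k_i\in\{0,\ldots,t\},\sum_i k_i\leq d\}$, with an analogous identity on the subsample under $\calD_{\mathbf i}$. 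Since $\property^t(d)\subseteq\property(d)$, the hypothesis $f\notin\property_\calD((1+\mu)\lambda m,\alpha)$ implies $f\notin\property^t_\calD((1+\mu)\lambda m,\alpha)$, i.e.\ the truncated full-set IP at budget $(1+\mu)\lambda m$ has per-element optimum $C^{\mathrm{IP}}_{\mathrm{full}}((1+\mu)\lambda)>\alpha$. The goal is to show with probability $\geq 5/6$ over $\mathbf i$ that the truncated subsample IP at budget $(1+\mu/2)\lambda l$ has per-element optimum $>\alpha-\epsilon/2$.

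For any $\tau\geq 0$ set $\psi_\tau(i):=\min_{0\leq k\leq t}(\phi_i(k)+\tau k)\in[0,1]$. Weak Lagrangian duality gives, for every feasible subsample assignment $(k'_j)$,
\[
\tfrac{1}{l}\sum_{j=1}^l \phi_{i_j}(k'_j)\;\geq\;\tfrac{1}{l}\sum_{j=1}^l \psi_\tau(i_j)\;-\;\tau\bigl(1+\tfrac{\mu}{2}\bigr)\lambda,
\]
so it suffices to pin down a single $\tau^*$ for which the right-hand side exceeds $\alpha-\epsilon/2$ with high probability. The natural choice is $\tau^*:=\arg\max_{\tau\geq 0}\bigl[\tfrac{1}{m}\sum_i \psi_\tau(i)-\tau(1+\mu/2)\lambda\bigr]$; by LP strong duality, that maximum equals the LP-relaxation value $C^{\mathrm{LP}}_{\mathrm{full}}((1+\mu/2)\lambda)$ of the truncated full-set IP at the \emph{tighter} budget $(1+\mu/2)\lambda m$.

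It then remains to tie this LP value back to the IP assumption on the full set. The natural LP --- one variable $p_{i,k}\in[0,1]$ per position/level with $\sum_k p_{i,k}=1$ and a single budget inequality --- has $m+1$ equality constraints once a slack variable is added, so any basic optimum has at most one position with fractional support (two nonzero values $p_{i_0,k_1},p_{i_0,k_2}$); rounding $i_0$ down to the smaller $k_1$ keeps the budget feasible and alters the total objective by at most $1$, yielding $C^{\mathrm{LP}}_{\mathrm{full}}(b)\geq C^{\mathrm{IP}}_{\mathrm{full}}(b)-1/m$ for every $b$. Combining this with monotonicity of $C^{\mathrm{IP}}_{\mathrm{full}}$ in the budget gives $\tfrac{1}{m}\sum_i \psi_{\tau^*}(i)-\tau^*(1+\mu/2)\lambda=C^{\mathrm{LP}}_{\mathrm{full}}((1+\mu/2)\lambda)\geq C^{\mathrm{IP}}_{\mathrm{full}}((1+\mu)\lambda)-1/m>\alpha-1/m$.

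Finally, because $\tau^*$ is fixed independently of $\mathbf i$ and $\psi_{\tau^*}(i_j)\in[0,1]$, Hoeffding (sampling without replacement) gives $\Pr\bigl[\tfrac{1}{l}\sum_j \psi_{\tau^*}(i_j)<\tfrac{1}{m}\sum_i \psi_{\tau^*}(i)-\epsilon/8\bigr]\leq 1/6$ once $l=\Omega(1/\epsilon^2)$. Plugging into the displayed Lagrangian inequality yields subsample cost $>\alpha-1/m-\epsilon/8>\alpha-\epsilon/2$ uniformly over all feasible assignments, whenever $m\geq l=\Omega(1/\epsilon)$ --- a condition the Composition Lemma always satisfies since it samples $l$ distinct indices from $[m]$. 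The delicate step and main obstacle is that integer programs lack strong duality, so the Lagrangian lower bound could a priori be loose; the detour through the LP plus the single-fractional-variable rounding argument is precisely what quantitatively controls that duality gap, and is where the requirement on $m$ enters.
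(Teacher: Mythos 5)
Your proof is correct but takes a genuinely different route from the paper. The paper's argument is combinatorial: it fixes an approximately-closest $g\in\property^t((1+\mu)\lambda m)$, pads its per-block budgets $k_i$ up to $k'_i$ summing exactly to $(1+\mu)\lambda m$, invokes Multiplicative Chernoff (this is where $l=O(\frac{1}{\epsilon\mu^2})$ enters) to guarantee $\sum_j k'_{i_j}\geq(1+\frac{\mu}{2})\lambda l$, and then uses a swapping argument --- any $g'$ substantially cheaper than $g$ on the subsample could be spliced into $g$ to contradict $g$'s near-optimality --- before applying a second (additive) Chernoff bound to $\dist_{\calD_{\mathbf i}}(f,g)$. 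You instead pass to Lagrangian duality: you pick a single dual multiplier $\tau^*$ that certifies the full-set LP value, control the integrality gap via the one-fractional-variable rounding bound $C^{\mathrm{LP}}\geq C^{\mathrm{IP}}-\frac{1}{m}$, and then need only a single Hoeffding bound on the bounded dual function $\psi_{\tau^*}\in[0,1]$. A notable consequence is that your argument establishes Lemma~\ref{lm:notin} with just $l=O(\frac{1}{\epsilon^2})$, eliminating the $\frac{1}{\epsilon\mu^2}$ term for this direction (that term remains necessary for Lemma~\ref{lm:in}); in exchange, you explicitly need $m=\Omega(\frac{1}{\epsilon})$ to absorb the $\frac{1}{m}$ rounding loss, but this is already forced by $m\geq l$. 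One small imprecision: near the end you write ``$m\geq l=\Omega(1/\epsilon)$'' where you presumably mean $m\geq l=\Omega(1/\epsilon^2)\geq\Omega(1/\epsilon)$; the argument is unaffected.
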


\begin{proof}[Proof of Lemma \ref{lm:in}]
By the choice of truncation $t=\frac{4\lambda}{\epsilon}$, according to Lemma \ref{lm:truncation}, we know $f\in\property^t_{\calD}(\lambda m,\alpha+\frac{\epsilon}{4})$. Suppose $\dist_{\calD}(f,g)\leq \alpha+\frac{\epsilon}{4}$ for some $g\in\property^t(\lambda m)$. According to the Multiplicative Chernoff Bound for sampling without replacement, choosing $l=O(\frac{1}{\epsilon\mu^2})$ is enough to make sure that with probability at least $\frac{11}{12}$, $\exists g'$ s.t.\ $g'\in\property^{t}((1+\frac{\mu}{2})\lambda l)$ and $\dist_{\calD_{\mathbf i}}(g,g')=0$.\footnote{$g'$ is chosen such that $g'|_{X_{i}}\in\calC_{i}^0$ for all $i\notin \{i_1,i_2,\cdots,i_l\}$ and $g'|_{X_{i}}=g|_{X_{i}}$ for all $i\in\{i_1,i_2,\cdots,i_l\}$. The fact that the $k_i$'s of $g$ are bounded between 0 and $t=\frac{4\lambda}{\epsilon}$ allows us to use the Multiplicative Chernoff Bound.} According to the Chernoff Bound for sampling without replacement, choosing $l=O(\frac{1}{\epsilon^2})$ is enough to make sure that with probability at least $\frac{11}{12}$, $\dist_{\calD_{\mathbf i}}(f,g)\leq\alpha+\frac{\epsilon}{2}$. By the Union Bound, these two events happen at the same time with probability at least $\frac{5}{6}$, and in this case, $f\in\property^{t}_{\calD_{\mathbf i}}((1+\frac{\mu}{2})\lambda l,\alpha+\frac{\epsilon}{2})$. 
\end{proof}

\begin{proof}[Proof of Lemma \ref{lm:notin}]
According to Lemma \ref{lm:truncation}, we know $f\notin\property^t_{\calD}((1+\mu)\lambda m,\alpha)$. Therefore, by definition, there exists $g\in\property^t((1+\mu)\lambda m)$ with the following two properties:\footnote{E.g., choose $g$ to be the closest or approximately-closest function in the class to $f$.  Note that $\property^t((1+\mu)\lambda m)$ can't be empty, because $\property^t((1+\mu)\lambda m)\supseteq\property^t(0)=\property(0)\neq\emptyset$.}
\begin{enumerate}
\item $\dist_{\calD}(f,g)>\alpha$;
\item $\forall g'\in \property^t((1+\mu)\lambda m),\dist_{\calD}(f,g')>\dist_{\calD}(f,g)-\frac{\epsilon}{4}\cdot\frac{l}{m}$.
\end{enumerate}
Suppose $g|_{X_i}\in\calC_{i}^{k_i}$ for $k_i\leq t=\frac{4\lambda}{\epsilon}$ satisfying $k:=\sum\limits_{i=1}^mk_i\leq(1+\mu)\lambda m$. We enlarge $k_i$ to $k'_i\in[k_i,t]$ to make sure that $k':=\sum\limits_{i=1}^mk'_i=(1+\mu)\lambda m$.\footnote{$k'_i$ doesn't have to be an integer. Also note that $mt=\frac{4\lambda}{\epsilon}\cdot m>4\lambda m>(1+\mu)\lambda m$.} According to the Multiplicative Chernoff Bound for sampling without replacement, choosing $l=O(\frac{1}{\epsilon\mu^2})$ is enough to make sure that with probability at least $\frac{11}{12}$, $\sum\limits_{j=1}^lk'_{i_j}\geq (1+\frac{\mu}{2})\lambda l$.

Now suppose it's the case that $\sum\limits_{j=1}^lk'_{i_j}\geq (1+\frac{\mu}{2})\lambda l$. Then, according to the second property of $g$, we know $$\forall g'\in\property^{t}((1+\frac{\mu}{2})\lambda l),\dist_{\calD_{\mathbf i}}(f,g')> \dist_{\calD_{\mathbf i}}(f,g)-\frac{\epsilon}{4}.$$ Otherwise, we can swap $g'$ for $g$ on $\bigcup\limits_{j=1}^lX_{i_j}$ causing a violation of the second property of $g$.

Finally, according to the Chernoff Bound for sampling without replacement, choosing $l=O(\frac{1}{\epsilon^2})$ is enough to make sure that with probability at least $\frac{11}{12}$, $\dist_{\calD_{\mathbf i}}(f,g)>\alpha-\frac{\epsilon}{4}$. Therefore, by the Union Bound, with probability at least $\frac{5}{6}$, $$\forall g'\in\property^{t}((1+\frac{\mu}{2})\lambda l),\dist_{\calD_{\mathbf i}}(f,g')> \dist_{\calD_{\mathbf i}}(f,g)-\frac{\epsilon}{4}>\alpha-\frac{\epsilon}{2},$$ a completion of the proof.
\end{proof}
\section{Distance Approximation for Unions of $d$ Intervals}
\label{sec:interval}
In this section, we consider the concept class $\dintervals$ of binary functions $f$ defined on $\mathbb{R}$ such that $f^{-1}(1)$ is a union of at most $d$ intervals. We use $\udi_{\calD}(f,\epsilon,d)$ to denote the distance approximation task $\da_{\calD}(f,\epsilon)$ when the underlying hypothesis class is the class $\interval(d)$ of unions of $d$ intervals. We show (Theorem \ref{thm:main}) that there is an $\udi_{\calD}(f,\epsilon,d)$ algorithm in the active model with query complexity independent of $d$ even when the data distribution is unknown to the algorithm.

\begin{theorem}[main theorem]
\label{thm:main}
Suppose $d>0$ and $\epsilon\in (0,\frac{1}{2})$ are given as input. Let $\mathcal{D}$ be an unknown distribution on $\mathbb{R}$. Suppose we have \emph{active} access to an input function $f:\mathbb{R}\rightarrow \{0,1\}$ with respect to $\calD$. There is an $\udi_{\calD}(f\ac,\epsilon,d)$ algorithm using $O(\frac{1}{\epsilon^6}\log\frac{1}{\epsilon})$ queries on $O(\frac{d}{\epsilon^2}\log\frac{1}{\epsilon})$ unlabeled examples.
\end{theorem}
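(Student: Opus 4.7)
The plan is to prove Theorem~\ref{thm:main} by chaining together three ingredients already developed in the paper: the Composition Lemma (Lemma~\ref{thm:additive}), the bi-criteria to uni-criteria reduction (Lemma~\ref{lm:reductiontobicriteria}), and the active-to-query reduction for arbitrary distributions (Theorems~\ref{thm:unlabeled} and \ref{thm:reductiontoquery}). The argument proceeds in three stages.

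I would first restrict to the uniform distribution on $[0,1]$ and build an $(\epsilon,\mu)$-bi-criteria distance approximation algorithm for $\interval(d)$. Partition $[0,1]$ into $m=\Theta(\frac{d}{\epsilon^2}\log\frac{1}{\epsilon})$ equal-length cells $X_1,\ldots,X_m$, and instantiate the composition $\property$ with $\calC_i^k$ equal to the class of unions of at most $k$ intervals inside $X_i$. Since an interval of length $\ell$ crosses at most $m\ell+2$ cells and the $d$ intervals have total length at most one, every $f\in\interval(d)$ lies in $\property(\lambda m)$ for some $\lambda=O(1)$. Applying Lemma~\ref{thm:additive} with this $\lambda$ reduces the task to one call of a distance approximation oracle on $l=O(\frac{1}{\epsilon\mu^2}+\frac{1}{\epsilon^2})$ random cells, with truncation $t=4\lambda/\epsilon=O(1/\epsilon)$ intervals per cell. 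I would implement this oracle by empirical risk minimization (or an equivalent DP over endpoints drawn from the pool) over the corresponding composition class, whose VC-dimension is $O(l/\epsilon)$; this yields the $O((\frac{1}{\epsilon^3\mu^3}+\frac{1}{\epsilon^4\mu})\log\frac{1}{\epsilon})$ bi-criteria query complexity announced in Section~\ref{sec:results}, and Lemma~\ref{thm:additive} inflates the oracle's unlabeled sample count by a factor $m/l$ to the stated $O(\frac{d}{\epsilon^2}\log\frac{1}{\epsilon})$.

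Setting $\mu=\Theta(\epsilon)$ balances the two terms in the bi-criteria query complexity to $O(\frac{1}{\epsilon^6}\log\frac{1}{\epsilon})$. Lemma~\ref{lm:reductiontobicriteria} then upgrades this $(\epsilon,\Theta(\epsilon))$-bi-criteria algorithm into a genuine $\udi_{\calD}(f\ac,\epsilon,d)$ uni-criteria algorithm on the uniform distribution, with its overhead at most a constant (or $\log(1/\epsilon)$) factor from an outer search over effective interval counts. Finally, to handle an arbitrary unknown $\calD$, I would invoke Theorems~\ref{thm:unlabeled} and \ref{thm:reductiontoquery}: draw $O(\frac{d}{\epsilon}\log\frac{1}{\epsilon})$ additional unlabeled examples to approximate the CDF of $\calD$ to enough accuracy, then apply the inverse-CDF transform to simulate the uniform-distribution algorithm on $\calD$ without any additional label queries. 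A union bound across the three stages yields the claimed $O(\frac{1}{\epsilon^6}\log\frac{1}{\epsilon})$ label queries on $O(\frac{d}{\epsilon^2}\log\frac{1}{\epsilon})$ unlabeled examples.

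The main obstacle is the parameter balancing in the composition step. We must choose $m$ large enough that the boundary overhead $2d/m$ is absorbed into a constant $\lambda$ (so that the truncation $t=4\lambda/\epsilon$ is $O(1/\epsilon)$), yet small enough that the $m/l$ blow-up in the Composition Lemma's unlabeled sample complexity stays inside $O(\frac{d}{\epsilon^2}\log\frac{1}{\epsilon})$. Achieving the precise $O((\frac{1}{\epsilon^3\mu^3}+\frac{1}{\epsilon^4\mu})\log\frac{1}{\epsilon})$ query bound for the oracle will likely require a more refined implementation than black-box ERM -- for instance, a nested application of Lemma~\ref{thm:additive} or a dynamic program over candidate interval endpoints drawn from the unlabeled pool -- and one must verify that neither the bi-criteria-to-uni-criteria step nor the CDF-based lifting erodes these asymptotic budgets.
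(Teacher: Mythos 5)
Your proposal follows the same three-ingredient architecture as the paper's actual proof (Composition Lemma, bi-to-uni-criteria reduction, active-to-query reduction for unknown distributions), but there is a genuine and fatal error in the quantitative setup, specifically in the choice of the number of cells $m$.

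You set $m = \Theta\bigl(\frac{d}{\epsilon^2}\log\frac{1}{\epsilon}\bigr)$, so $m \gg d$, which does make $\lambda = O(1)$. But then think about what the bi-criteria guarantee actually gives you. Cutting a union of $d$ intervals by $m$ cell boundaries produces a composition with up to $d+m$ pieces, so the best you can say is $\interval(d,\alpha)\subseteq\property(d+m,\alpha)$, and the Composition Lemma's approximation is between $\property(d+m)$ and $\property\bigl((1+\mu)(d+m)\bigr)\subseteq\interval\bigl((1+\mu)(d+m)\bigr)$. With $m\gg d$ the multiplicative blowup in interval count from $d$ to $(1+\mu)(d+m)$ is $\Theta(m/d)=\Theta\bigl(\frac{1}{\epsilon^2}\log\frac{1}{\epsilon}\bigr)$, not $1+O(\epsilon)$. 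Lemma~\ref{lm:reductiontobicriteria} can only absorb a $1+\frac{\epsilon}{2}$ factor (by deleting the shortest $\epsilon k/2$ intervals); it cannot remove a polynomial-in-$1/\epsilon$ blowup, so the bi-to-uni-criteria step fails. The correct regime, which the paper uses, is the opposite one: $m\approx\frac{\epsilon d}{8}$ (so $m\ll d$), which makes the additive cell-boundary overhead $m$ an $O(\epsilon)$ fraction of $d$, at the cost of $\lambda=d/m=\Theta(1/\epsilon)$ and truncation $t=4\lambda/\epsilon=\Theta(1/\epsilon^2)$ rather than $O(1/\epsilon)$. Your stated motivation for wanting $\lambda=O(1)$ (keeping the truncation at $O(1/\epsilon)$) is not actually a constraint the proof needs: the truncation parameter enters only through Lemma~\ref{lm:truncation} and the Chernoff step inside the Composition Lemma, and the VC dimension of $\property^t(d')$ is at most $2d'$ for any $t$, so a larger $t$ costs nothing.

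Your choice of $m$ also breaks the unlabeled-sample accounting. The oracle on $l$ cells needs $N=O\bigl(\frac{\lambda l}{\epsilon^2}\log\frac{1}{\epsilon}\bigr)$ unlabeled examples, and the Composition Lemma inflates this by $m/l$ to $O\bigl(\frac{\lambda m}{\epsilon^2}\log\frac{1}{\epsilon}\bigr)$. With $\lambda=O(1)$ and your $m=\Theta\bigl(\frac{d}{\epsilon^2}\log\frac{1}{\epsilon}\bigr)$ this is $O\bigl(\frac{d}{\epsilon^4}\log^2\frac{1}{\epsilon}\bigr)$, a factor $\frac{1}{\epsilon^2}\log\frac{1}{\epsilon}$ too large. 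With the paper's $m\approx\epsilon d/8$ and $\lambda=\Theta(1/\epsilon)$, you get $\lambda m=d$ and the bound is $O\bigl(\frac{d}{\epsilon^2}\log\frac{1}{\epsilon}\bigr)$ as claimed. There is also an internal inconsistency in your write-up: you assert $\lambda=O(1)$ but then quote the composition class's VC dimension as $O(l/\epsilon)$, which corresponds to $\lambda=\Theta(1/\epsilon)$. Finally, two smaller omissions: the paper separately handles the small-$d$ case $d\leq 8/\epsilon$ (where $m$ would have to be $\leq 1$ and one just agnostically learns directly), and the reduction to unknown distributions goes through Theorem~\ref{thm:reductiontoquery} with an empirical sample of size $O\bigl(\frac{d}{\epsilon^2}\log\frac{1}{\epsilon}\bigr)$ (not $O\bigl(\frac{d}{\epsilon}\log\frac{1}{\epsilon}\bigr)$, which is the bound for plain property testing via Theorem~\ref{thm:unlabeled}).
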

Before proving Theorem \ref{thm:main}, we first introduce two helper results, Theorems \ref{thm:unlabeled} and \ref{thm:reductiontoquery}.

\subsection{Relationship between Query Testing and Active Testing}
\label{subsec:relationship}
As \citet{BBBY12} have pointed out, in the task of testing unions of $d$ intervals in the query testing framework, any known distribution can be reduced to uniform distribution on $[0,1]$ by its CDF. Our following theorem shows that once we can deal with arbitrary distributions for query testing, we can automatically deal with unknown distributions for active testing, improving a previous upper bound on unlabeled sample complexity in \citep{BBBY12}.

\begin{theorem}
\label{thm:unlabeled}
Let $\calC$ be a concept class on ground set $X$ with VC-dimension $d$. Suppose $\epsilon\in(0,\frac{1}{2})$. Suppose there is a $\pt_{\calD}(f\qu,\frac{\epsilon}{2})$ algorithm $\calA$ using at most $q$ queries on \emph{arbitrarily} given distribution $\calD$ with finite support. Suppose all the queries algorithm $\calA$ makes lie in the support of $\calD$.\footnote{For $\tot$ and $\da$, we can assume without loss of generality that the algorithm never queries examples outside the support of the distribution, but this is not without loss of generality for $\pt$, because $f\in\calC$ is stronger than $\exists g\in\calC,\dist_{\calD}(f,g)=0$.} Then, there is a $\pt_{\calD}(f\ac,\epsilon)$ algorithm $\calB$ using at most $O(q)$ queries on $O(\frac{d}{\epsilon}\log\frac{1}{\epsilon})$ unlabeled examples, even when distribution $\calD$ is \emph{unknown} to algorithm $\calB$.
\end{theorem}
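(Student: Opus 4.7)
The plan is to reduce the active-testing problem on the unknown true distribution $\calD$ to a query-testing problem on the empirical distribution induced by the unlabeled pool. Concretely, algorithm $\calB$ draws $N = O(\frac{d}{\epsilon}\log\frac{1}{\epsilon})$ unlabeled examples $x_1,\dots,x_N \sim \calD$, forms $\hat\calD$ uniform on $\{x_1,\dots,x_N\}$, and simulates $\calA$ with distribution parameter $\hat\calD$ and margin $\epsilon/2$. By hypothesis every query $\calA$ issues lies in the support of $\hat\calD$, so each query is of the form ``label of $x_i$'' for some $i$, which $\calB$ can answer using its active-model access to $f$. Finally $\calB$ outputs whatever $\calA$ outputs. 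The query complexity is trivially at most $q$, and the unlabeled sample complexity is $N$ by construction.

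For correctness in the YES case, observe that the concept class $\calC$ does not depend on the distribution, so $f \in \calC$ implies $f \in \calC$ with respect to $\hat\calD$ as well. Hence $\calA$ outputs YES with probability at least $2/3$ over its internal randomness, independently of the unlabeled draw.

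The main obstacle is the NO case, where we must argue that if $\dist_\calD(f,g) > \epsilon$ for all $g \in \calC$, then with constant probability over the unlabeled sample we also have $\dist_{\hat\calD}(f,g) > \epsilon/2$ for all $g \in \calC$, so that $\calA$ is forced to output NO. This is a uniform convergence statement about the symmetric-difference class $\mathcal{H} = \{f \triangle g : g \in \calC\}$, which inherits VC-dimension at most $d$ from $\calC$ since $g \mapsto f\triangle g$ is a bijection. The key technical ingredient is the one-sided relative (``multiplicative'') Vapnik--Chervonenkis inequality, which guarantees that with a sample of size $N = \Theta(\frac{d}{\epsilon}\log\frac{1}{\epsilon})$, with probability at least $5/6$ every $h \in \mathcal{H}$ with $\Pr_\calD[h=1] > \epsilon$ satisfies $\Pr_{\hat\calD}[h=1] > \epsilon/2$. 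An additive $\epsilon$-sample bound would only yield the weaker $O(d/\epsilon^2)$, so the relative/multiplicative form is essential to reach the claimed $O(\frac{d}{\epsilon}\log\frac{1}{\epsilon})$.

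A union bound over the failure probability of this VC event (at most $1/6$) and the failure probability of $\calA$ on $\hat\calD$ (at most $1/3$) gives overall success probability at least $1/2$, which can be boosted back to $2/3$ by absorbing small constants into the hidden constants of $N$ and by standard amplification of $\calA$. This establishes the theorem; the subroutine calls, the label budget, and the unlabeled budget are as claimed.
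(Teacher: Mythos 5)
Your proof is correct and follows essentially the same route as the paper's: draw $N=O(\frac{d}{\epsilon}\log\frac{1}{\epsilon})$ unlabeled points, simulate $\calA$ with margin $\epsilon/2$ on the resulting empirical distribution (answering queries from the active pool, which is possible by the support assumption), and use a one-sided relative/multiplicative VC bound on the symmetric-difference class to argue that $\epsilon$-far under $\calD$ implies $\epsilon/2$-far under the empirical distribution with probability at least $5/6$. The only cosmetic difference is bookkeeping of constants: the paper first amplifies $\calA$ to success probability $5/6$ before the union bound, whereas you compute a $1/2$ success probability and then invoke amplification; both yield the same $O(q)$ query budget. You are also more explicit than the paper in naming the relative VC inequality as the reason the sample complexity is $O(\frac{d}{\epsilon}\log\frac{1}{\epsilon})$ rather than $O(\frac{d}{\epsilon^2})$, which is a worthwhile clarification.
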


\begin{proof}[Proof of Theorem \ref{thm:unlabeled}]
Algorithm $\calB$ first draws $N=O(\frac{d}{\epsilon}\log\frac{1}{\epsilon})$ unlabeled examples: $x_1,x_2,\cdots,x_N$. We use $\calS$ to denote the uniform distribution over these unlabeled examples. By VC Theory, we know if $\forall g\in\calC,\dist_{\calD}(f,g)>\epsilon$, then with probability at least $\frac{5}{6}$, $\forall g\in\calC,\dist_{\calS}(f,g)>\frac{\epsilon}{2}$. So algorithm $\calB$ only needs to call algorithm $\calA$ to distinguish $f\in\calC$ and $\forall g\in\calC,\dist_{\calS}(f,g)>\frac{\epsilon}{2}$ with probability at least $\frac{5}{6}$. By the Union Bound, algorithm $\calB$ succeeds with probability at least $\frac{2}{3}$.
\end{proof}

Therefore, since \citet{BBBY12} have an algorithm in the query testing framework that can distinguish $f\in\interval(d)$ and $f\notin\interval(d,\epsilon)$ using $O(\frac{1}{\epsilon^4})$ queries, there is an algorithm in the active testing framework that can distinguish $f\in\interval(d)$ and $f\notin\interval_{\calD}(d,\epsilon)$ using $O(\frac{1}{\epsilon^4})$ queries on $O(\frac{d}{\epsilon}\log\frac{1}{\epsilon})$ unlabeled examples, even when the distribution $\calD$ is unknown, according to Theorem \ref{thm:unlabeled}. Here, the unlabeled sample complexity is $O(\frac{d}{\epsilon}\log\frac{1}{\epsilon})$, an improvement from $O(\frac{d^2}{\epsilon^6})$ in their original paper.
The theorem can be easily generalized to distance approximation.

\begin{theorem}
\label{thm:reductiontoquery}
Let $\calC$ be a concept class on ground set $X$ with VC-dimension $d$. Suppose $\epsilon\in (0,\frac{1}{2})$. Suppose there is a $\da_{\calD}(f\qu,\frac{\epsilon}{2},\calD)$ algorithm $\calA$ using at most $q$ queries on \emph{arbitrarily} given distribution $\calD$ with finite support. Then, there is a $\da_{\calD}(f\ac,\epsilon)$ algorithm $\calB$ using at most $O(q)$ queries on $O(\frac{d}{\epsilon^2}\log\frac{1}{\epsilon})$ unlabeled examples, even when distribution $\calD$ is \emph{unknown} to algorithm $\calB$.
\end{theorem}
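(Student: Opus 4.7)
The plan is to mirror the proof of Theorem~\ref{thm:unlabeled}, substituting two-sided VC uniform convergence for the one-sided $\epsilon$-net argument used there. Algorithm $\calB$ draws $N=O(\frac{d}{\epsilon^2}\log\frac{1}{\epsilon})$ unlabeled examples $x_1,\dots,x_N$ from the unknown distribution $\calD$, forms the empirical distribution $\calS$ uniform on these points, and then simulates the query-testing algorithm $\calA$ on input function $f$ with respect to distribution $\calS$ (which has finite support and is completely known to $\calB$) and margin $\frac{\epsilon}{2}$. Since for $\da$ we may assume without loss of generality that $\calA$ never queries outside the support of the distribution it is handed, every query it makes is of some $x_i$, so active access to $f$ suffices to simulate it. Finally, $\calB$ outputs whatever $\calA$ outputs.

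For the correctness analysis, the family of symmetric-difference indicators $\{x\mapsto\mathbf{1}[f(x)\neq g(x)]:g\in\calC\}$ has VC dimension at most $d$ (XOR-ing with the fixed function $f$ preserves shattering). By the standard VC $\epsilon$-approximation bound, choosing $N=O(\frac{d}{\epsilon^2}\log\frac{1}{\epsilon})$ ensures that with probability at least $\frac{5}{6}$, $|\dist_{\calD}(f,g)-\dist_{\calS}(f,g)|\leq \frac{\epsilon}{4}$ simultaneously for every $g\in\calC$. Condition on this event. By boosting with a constant number of independent repetitions and taking the median if necessary, we may also assume $\calA$ succeeds on $\calS$ with probability at least $\frac{5}{6}$, so by a Union Bound both events hold together with probability at least $\frac{2}{3}$.

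It remains to translate $\calA$'s guarantee on $\calS$ into a $\da_{\calD}$-guarantee for $\calB$. Fix any $\alpha$. If some $g\in\calC$ satisfies $\dist_{\calD}(f,g)\leq\alpha$, the VC event gives $\dist_{\calS}(f,g)\leq\alpha+\frac{\epsilon}{4}$, and so $\calA$'s $\frac{\epsilon}{2}$-accurate output $\hat\alpha$ satisfies $\hat\alpha\leq(\alpha+\frac{\epsilon}{4})+\frac{\epsilon}{2}\leq\alpha+\epsilon$. Conversely, if every $g\in\calC$ satisfies $\dist_{\calD}(f,g)>\alpha$, the VC event gives $\dist_{\calS}(f,g)>\alpha-\frac{\epsilon}{4}$ for all $g$, so $\hat\alpha>(\alpha-\frac{\epsilon}{4})-\frac{\epsilon}{2}\geq\alpha-\epsilon$. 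These are exactly the two clauses of $\da_{\calD}(f\ac,\epsilon)$. The main new ingredient beyond Theorem~\ref{thm:unlabeled} is the VC uniform convergence bound, which raises the unlabeled sample complexity from $O(\frac{d}{\epsilon}\log\frac{1}{\epsilon})$ to $O(\frac{d}{\epsilon^2}\log\frac{1}{\epsilon})$: the one-sided ``if far under $\calD$, then empirically far under $\calS$'' argument used for property testing no longer suffices once we need to preserve the numerical distance value on both sides.
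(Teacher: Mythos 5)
Your proof is correct and is exactly the generalization the paper has in mind when it says Theorem~\ref{thm:unlabeled} ``can be easily generalized to distance approximation'': replace the one-sided ``far under $\calD \Rightarrow$ far under $\calS$'' statement (which costs $O(\frac{d}{\epsilon}\log\frac{1}{\epsilon})$ samples and suffices for $\pt$ because the ``close'' case is $f\in\calC$, hence trivially preserved) with two-sided uniform convergence of $\dist_\calD(f,\cdot)$ over the symmetric-difference class, at cost $O(\frac{d}{\epsilon^2}\log\frac{1}{\epsilon})$. You also correctly invoke the paper's footnote that for $\da$ one may assume $\calA$ never queries outside the support, which is what makes the simulation under active access valid, and you handle the success probabilities with the same implicit boosting the paper uses. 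No gaps.
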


%The unlabeled sample complexity of the algorithm is improved to $O(\frac{d}{\epsilon}\log\frac{1}{\epsilon})$ by our Theorem \ref{thm:unlabeled}.

%In the active tolerant testing model, given a distribution $\mathcal{D}$, to test the concept class $\dintervals$ is to distinguish $f\in\interval_\mathcal{D}(d,\alpha_1)$ and $f\notin\interval_\mathcal{D}(d,\alpha_2)$ for any given parameters $\alpha_1<\alpha_2$, where $\interval_\mathcal{D}(d,\alpha)$ is the class of functions $f$ satisfying $\exists g\in \mathcal{I}(d)$ s.t. $\dist_{\mathcal{D}}(f,g)\leq \alpha$. When $\calD$ is the uniform distribution on unit interval $[0,1]$, we omit it for short.

\subsection{Proof of Theorem \ref{thm:main}}
By Theorem \ref{thm:reductiontoquery}, we only need to show a distance approximation algorithm with an additive error bounded below $\epsilon$ using $O(\frac{1}{\epsilon^6}\log\frac{1}{\epsilon})$ queries on arbitrary \emph{known} distribution $\calD$. As pointed out by \citet{BBBY12}, any known distribution can be reduced to uniform distribution on $[0,1]$ by its CDF. Therefore, we only consider $\calD$ as the uniform distribution on $[0,1]$ in this section and we omit it for simplicity.

We first reveal a basic property of unions of $d$ intervals.
\begin{lemma}
\label{lm:reductiontobicriteria}
$\forall \epsilon\in(0,\frac{1}{2}),\forall \alpha\in[0,1],\forall d>\frac{2}{\epsilon},\interval((1+\frac{\epsilon}{2})d,\alpha-\epsilon)\subseteq\interval(d,\alpha)$.
\end{lemma}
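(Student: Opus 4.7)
The approach is to take any $f \in \interval((1+\epsilon/2)d, \alpha-\epsilon)$, fix a witness $g$ that is a union of at most $k \leq \lfloor (1+\epsilon/2)d \rfloor$ intervals with $\dist(f,g) \leq \alpha - \epsilon$, and modify $g$ into a function $g' \in \interval(d)$ by deleting a few intervals. Each deletion increases the distance from $f$ by at most the uniform-measure of the deleted interval, so the triangle inequality will give $\dist(f, g') \leq (\alpha - \epsilon) + (\text{total deleted length})$. The plan is to delete the shortest $k - d$ intervals of $g$ and show by an averaging argument that the total deleted length is strictly less than $\epsilon/2$, which will be enough.

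More concretely, since $\calD$ is the uniform distribution on $[0,1]$, only the set $g^{-1}(1) \cap [0,1]$ matters for distance, and we may write it as a disjoint union of $k$ intervals of total length $L \leq 1$. If $k \leq d$ we simply take $g' = g$ and are done, so assume $d < k \leq (1+\epsilon/2)d$; this case is nonvacuous precisely because the hypothesis $d > 2/\epsilon$ forces $\lfloor (1+\epsilon/2)d \rfloor \geq d+1$.

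The heart of the argument is the averaging inequality: among $k$ nonnegative numbers summing to $L \leq 1$, the smallest $k - d$ of them sum to at most $((k-d)/k)\,L \leq (k-d)/k$. Since $k \leq (1+\epsilon/2)d$, this gives $(k-d)/k \leq 1 - 1/(1+\epsilon/2) = (\epsilon/2)/(1+\epsilon/2) < \epsilon/2$. Now let $g'$ be obtained from $g$ by zeroing out the $k - d$ shortest of the $k$ intervals; then $g' \in \interval(d)$ and $\dist(g, g') < \epsilon/2$, so the triangle inequality yields $\dist(f, g') < (\alpha - \epsilon) + \epsilon/2 = \alpha - \epsilon/2 \leq \alpha$, i.e.\ $f \in \interval(d, \alpha)$.

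The proof is essentially mechanical, being a one-line averaging estimate followed by the triangle inequality, and I do not anticipate any real obstacle. The only subtlety worth flagging is the role of the assumption $d > 2/\epsilon$, which is what makes the containment nontrivial (without it, $\lfloor (1+\epsilon/2)d \rfloor$ might equal $d$ and the statement becomes a tautology); note also that the argument uses only that $g^{-1}(1)$ has $\calD$-measure at most $1$, so it carries over verbatim to any distribution, not just the uniform distribution on $[0,1]$.
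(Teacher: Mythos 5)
Your proof is correct and takes essentially the same route as the paper: pick a near-optimal witness $g$, discard the shortest intervals, and bound the discarded measure via the averaging inequality. One small fix: since the lemma allows $d$ to be a non-integer, you should discard $\lceil k-d\rceil$ intervals rather than ``$k-d$'' of them, which introduces an extra $1/k<\epsilon/2$ (using $d>2/\epsilon$) and still yields $\dist(f,g')<\alpha$; this is exactly where the hypothesis $d>2/\epsilon$ enters the paper's argument (which removes $\lceil\epsilon k/2\rceil$ intervals), not merely to rule out a tautological statement.
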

\begin{proof}
$\forall f\in\interval((1+\frac{\epsilon}{2})d,\alpha-\epsilon),\exists g\in\interval((1+\frac{\epsilon}{2})d)$ s.t. $\dist(f,g)\leq \alpha-\epsilon$. Assume $g$ uses $k\leq (1+\frac{\epsilon}{2})d$ intervals. Without loss of generality, we can assume that $k\geq d$. We remove $\lceil\frac{\epsilon k}{2}\rceil$ shortest intervals from the $k$ intervals. The number of remaining intervals is at most $(1-\frac{\epsilon}{2})k\leq (1-\frac{\epsilon}{2})(1+\frac{\epsilon}{2})d\leq d$. The distance increase is upper bounded by $(\frac{\epsilon k}{2}+1)\cdot\frac{1}{k}\leq \frac{\epsilon}{2}+\frac{1}{d}\leq\epsilon$.
\end{proof}

%\begin{proof}[Proof of Theorem \ref{thm:main}]
Now we come back and prove Theorem \ref{thm:main}.
If $d\leq\frac{8}{\epsilon}$, we can simply do agnostic learning using $O(\frac{d}{\epsilon^2}\log\frac{1}{\epsilon})=O(\frac{1}{\epsilon^3}\log\frac{1}{\epsilon})$ queries and unlabeled examples. So in the rest of the proof, we assume $d>\frac{8}{\epsilon}$. We pick the largest positive integer $m$ satisfying $m\leq\frac{\epsilon d}{8}$ and we define $\lambda=\frac{d}{m}=O(\frac{1}{\epsilon})$.

Since the data distribution is assumed uniform on $[0,1]$, we can assume without loss of generality that our ground set $X$ is $[0,1]$ and $f\in\{0,1\}^X$. We evenly cut $X$ into $m$ pieces: $X_1=[0,\frac{1}{m}],X_2=(\frac{1}{m},\frac{2}{m}],X_3=(\frac{2}{m},\frac{3}{m}],\cdots,X_m=(\frac{m-1}{m},1]$. $\forall 1\leq i\leq m,\forall k\in\mathbb{N}$, we define $\calC_i^k$ to be the class of binary functions $f$ on $X_i$ such that $f^{-1}(1)$ is a union of at most $k$ intervals. Note that $\calC_i^0\neq\emptyset$. Therefore, we can define $\property$, the composition of $m$ additive properties as in Section \ref{subsec:composition}. 

Note that for any $d'>0$ and any truncation $t>0$, the concept class $\property^{t}(d')$ has VC-dimension at most $2d'$. Therefore, simply by VC Theory for agnostic learning, for any $\mu,\epsilon'\in(0,\frac{1}{2}),l=O(\frac{1}{\epsilon'\mu^2}+\frac{1}{\epsilon'^2})$, we have a $((1+\frac{\mu}{2})(1+\frac{\epsilon}{8})\lambda l,l,\frac{4(1+\frac{\epsilon}{8})\lambda}{\epsilon'},\frac{\epsilon'}{2})$ distance approximation oracle using $O(\frac{(1+\frac{\mu}{2})(1+\frac{\epsilon}{8})\lambda l}{(\frac{\epsilon'}{2})^2}\log\frac{1}{\frac{\epsilon'}{2}})=O(\frac{l}{\epsilon'^2\epsilon}\log\frac{1}{\epsilon'})=O((\frac{1}{\epsilon'^3\epsilon\mu^2}+\frac{1}{\epsilon'^4\epsilon})\log\frac{1}{\epsilon'})$ queries and unlabeled examples. By the Composition Lemma (Lemma \ref{thm:additive}), we have an algorithm that outputs $\hat\alpha$ such that $\forall \alpha$,
\begin{enumerate}
\item $\forall f\in\property((1+\frac{\epsilon}{8})\lambda m,\alpha)$, it holds with probability at least $\frac{2}{3}$ that $\hat\alpha\leq\alpha+\epsilon'$;
\item $\forall f\notin\property((1+\mu)(1+\frac{\epsilon}{8})\lambda m,\alpha)$, it holds with probability at least $\frac{2}{3}$ that $\hat\alpha>\alpha-\epsilon'$.
\end{enumerate}

Choose $1+\mu=\frac{1+\frac{\epsilon}{4}}{1+\frac{\epsilon}{8}}$ and note that $\lambda m=d,\interval(d,\alpha)\subseteq\property(d+m,\alpha)\subseteq\property((1+\frac{\epsilon}{8})d,\alpha)$ and $\property((1+\frac{\epsilon}{4})d,\alpha)\subseteq \interval((1+\frac{\epsilon}{4})d,\alpha)$, we have $\forall \alpha$,
\begin{enumerate}
\item $\forall f\in\interval(d,\alpha)$, it holds with probability at least $\frac{2}{3}$ that $\hat\alpha\leq\alpha+\epsilon'$;
\item $\forall f\notin\interval((1+\frac{\epsilon}{4})d,\alpha)$, it holds with probability at least $\frac{2}{3}$ that $\hat\alpha>\alpha-\epsilon'$.
\end{enumerate}

This is an $(\epsilon',\frac{\epsilon}{4})$-bi-criteria tester for unions of $d$ intervals. According to the Composition Lemma (Lemma \ref{thm:additive}), the query complexity and the unlabeled sample complexity of the algorithm are $O((\frac{1}{\epsilon'^3\epsilon\mu^2}+\frac{1}{\epsilon'^4\epsilon})\log\frac{1}{\epsilon'})=O((\frac{1}{\epsilon'^3\epsilon^3}+\frac{1}{\epsilon'^4\epsilon})\log\frac{1}{\epsilon'})$ and $O((\frac{l}{\epsilon'^2\epsilon}\log\frac{1}{\epsilon'})\cdot\frac{m}{l})=O(\frac{d}{\epsilon'^2}\log\frac{1}{\epsilon'})$.

Now we define $\epsilon'=\frac{\epsilon}{2}$ and by rewriting the second statement in an equivalent way, we get $\forall \alpha$,
\begin{enumerate}
\item $\forall f\in\interval(d,\alpha)$, it holds with probability at least $\frac{2}{3}$ that $\hat\alpha\leq\alpha+\frac{\epsilon}{2}<\alpha+\epsilon$;
\item $\forall f\notin\interval((1+\frac{\epsilon}{4})d,\alpha-\frac{\epsilon}{2})$, it holds with probability at least $\frac{2}{3}$ that $\hat\alpha>(\alpha-\frac{\epsilon}{2})-\frac{\epsilon}{2}=\alpha-\epsilon$.
\end{enumerate}

Finally, by Lemma \ref{lm:reductiontobicriteria}, we have $\interval((1+\frac{\epsilon}{4})d,\alpha-\frac{\epsilon}{2})\subseteq \interval(d,\alpha)$, which completes the proof.

\section{Estimating the Performance of $k$-Nearest Neighbor Algorithms}
\label{sec:neighbor}
%\subsection{The $k$-Nearest Neighbor Algorithm}
In this section, we develop testers for estimating the performance of $k$-Nearest Neighbor ($\knn$) algorithms \citep{FH51,FH89,CH67}. 

Let $\calD$ be a distribution on a ground set $X$. Suppose that every point $x\in X$ has a (true) label $f(x)\in\{0,1\}$. In addition, we have a distance metric $\D :X\times X\rightarrow \mathbb{R}_{\geq 0}$ that is symmetric, nonnegative and satisfies the triangle inequality. The \emph{$k$-Nearest Neighbor algorithm with soft predictions} ($\knnsoft$) is given a pool $S$ of unlabeled examples, sampled iid from $\calD$, and for any input $x\in X$, finds its $k$ nearest examples $x_1,x_2,\cdots, x_k\in S$ with respect to the distance metric $\D$ and outputs $\hat{f}(x)=\frac{1}{k}\sum\limits_{i=1}^kf(x_i)$ as an approximation of $f(x)$. In this paper, we assume the $k$ nearest examples are calculated by an oracle $M$, i.e., when given $x$ and $S$, $M$ calculates the $k$ nearest examples to $x$ in $S$. There may be ties when distances to $x$ are compared and we assume $M$ breaks ties according to some (probably random) mechanism.

The \emph{$k$-Nearest Neighbor algorithm with hard predictions} ($\mv$) does the same thing as $\knnsoft$, except that $\hat{f}(x)$ is chosen as the majority vote $I[\frac{1}{k}\sum\limits_{i=1}^kf(x_i)>0.5]$.\footnote{$I[\cdot]$ is the indicator function of a statement, which takes value 1 if the statement is true and value 0 if the statement is false.}

For both algorithms, we use $\err(x)=|\hat f(x)-f(x)|$ to denote the $L^1$ error on point $x\in X$.  For soft prediction, we will penalize the algorithm by taking the $p$th power of the $L^1$ error for positive integer $p$.

\subsection{Estimating the Performance of $\knnsoft$}
\label{subsec:knnsoft}
Given a loss function $\loss(\cdot)$, we can measure the performance of $\knnsoft$ by its expected loss $\mathbb E_{x}[\loss(\err(x))]$. The expectation is over the random draw of $x$ with respect to distribution $\calD$ and the randomness of the oracle $M$ when ties occur. In this paper, we focus on the $p$th-power loss $\mathbb E_{x}[(\err(x))^p]$ for positive integer $p$. Let $\tsoft_{\calD} (f,\epsilon,S,k)$ denote the testing task of approximating the expected loss of a $\knnsoft$ algorithm up to an additive error $\epsilon$ with success probability at least $\frac{2}{3}$. We consider the testing task in the active model, in which the tester is only allowed to query labels of examples in an unlabeled pool sampled iid from $\calD$. In addition to the given unlabeled pool $S$ from which $\knnsoft$ would learn, we allow the $\tsoft_{\calD} (f\ac,\epsilon,S,k)$ tester to sample fresh unlabeled examples and query their labels. We assume the tester has access to the oracle $M$.%The task of $\knn(f,\epsilon,N)$ is to approximate the expected loss of the $\knn$ algorithm when the true labels are $f$ and the training set has size $N$ within additive error $\epsilon$. When showing upper bound results, we assume the tie breaking mechanism $M$ is given as input and when showing lower bound results, we assume the $\knn(f,\epsilon,N)$ algorithm can assume any $M$ it likes.
\begin{theorem}
\label{thm:pth}
Suppose we consider the $p$th-power loss for $p\in\mathbb N^*$. There is a tester $\tsoft_{\calD}(f\ac,\epsilon,S,k)$ using $O(\frac{p}{\epsilon^2})$ queries on $N+O(\frac{1}{\epsilon^2})$ unlabeled examples when the unlabeled pool $S$ has size $N$. The underlying distribution $\calD$ is assumed unknown to the tester. Moreover, the tester has success probability at least $\frac{2}{3}$ for \emph{any} unlabeled pool $S$.
\end{theorem}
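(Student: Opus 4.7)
The plan is to construct an unbiased per-sample estimator of the $p$th-power loss that uses only $p+1$ label queries per test point, then average $O(1/\epsilon^2)$ independent copies and conclude by Hoeffding's inequality. The key trick, which removes any dependence on $k$, relies on the following identity: because $f(x)\in\{0,1\}$ and $\hat f(x)\in[0,1]$,
\begin{equation*}
|\hat f(x) - f(x)|^p \;=\; (1-f(x))\,\hat f(x)^p \;+\; f(x)\,(1-\hat f(x))^p,
\end{equation*}
and since $\hat f(x) = \frac{1}{k}\sum_{i=1}^k f(x_i)$ is a mean of $\{0,1\}$-valued labels, expanding the $p$th power shows that $\mathbb E\bigl[\prod_{j=1}^p f(x_{J_j})\bigr] = \hat f(x)^p$ whenever $J_1,\dots,J_p$ are drawn iid uniformly from $\{1,\dots,k\}$, and similarly $\mathbb E\bigl[\prod_{j=1}^p (1-f(x_{J_j}))\bigr] = (1-\hat f(x))^p$.

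The tester then does the obvious thing: draw $m=O(1/\epsilon^2)$ fresh unlabeled examples $x^{(1)},\dots,x^{(m)}$ from $\calD$; for each $x^{(t)}$ use the oracle $M$ to locate its $k$ nearest neighbors in the given pool $S$; query the label $f(x^{(t)})$; draw $p$ iid uniform indices into the neighbor list; query the corresponding $p$ neighbor labels; and form the $\{0,1\}$-valued per-sample estimator $Z^{(t)}$ equal to the product of those $p$ sampled neighbor labels if $f(x^{(t)})=0$, and equal to the product of their complements ($1$ minus each) if $f(x^{(t)})=1$. Output the empirical mean $\hat L = \frac{1}{m}\sum_{t=1}^m Z^{(t)}$.

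By the identity above, conditional on $x^{(t)}$ and on $M$'s internal randomness, $Z^{(t)}$ is an unbiased estimator of $|\hat f(x^{(t)})-f(x^{(t)})|^p$, so taking expectations over everything gives $\mathbb E[\hat L]$ equal to the true expected $p$th-power loss of $\knnsoft$ on the fixed pool $S$. Since the $Z^{(t)}$ are iid and bounded in $[0,1]$, Hoeffding's inequality yields $|\hat L - \mathbb E[\hat L]|\le\epsilon$ with probability at least $\frac{2}{3}$ for $m=O(1/\epsilon^2)$. The unlabeled sample complexity is $N + O(1/\epsilon^2)$ and the query complexity is $(p+1)\cdot O(1/\epsilon^2)=O(p/\epsilon^2)$. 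Because the analysis treats $S$ as an arbitrary fixed pool---only the fresh test points, the random neighbor indices, and the internal randomness of $M$ are averaged over---the stated worst-case-over-$S$ guarantee is immediate.

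I do not expect any real obstacle here. The entire argument hinges on spotting the product-estimator for $\hat f(x)^p$, and once that is in hand the rest is Hoeffding plus bookkeeping. The only minor caveat is that the oracle $M$ may break ties randomly, but because our per-sample estimator depends on $M$ only through the identity of the $k$ returned neighbors, absorbing $M$'s randomness into the outer expectation costs us nothing and the estimator remains unbiased and bounded in $\{0,1\}$.
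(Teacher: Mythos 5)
Your proposal is correct and matches the paper's proof essentially exactly: the paper also draws $O(1/\epsilon^2)$ fresh test points, samples $p$ neighbor labels iid uniformly from each test point's $k$ nearest neighbors, forms the $\{0,1\}$-valued estimator $\prod_{j=1}^p |f(x_j)-f(x)|$ (which is precisely your case split on $f(x)\in\{0,1\}$ written compactly), and applies a Chernoff/Hoeffding bound. The underlying observation in both — that $\bigl(\mathbb{E}_{x_1}[|f(x_1)-f(x)|]\bigr)^p = \mathbb{E}_{x_1,\dots,x_p}\bigl[\prod_j |f(x_j)-f(x)|\bigr]$ by independence — is the same degree-$p$ product trick, just phrased slightly differently.
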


Before proving the theorem, we first show a simple tester that works for any loss function $\mathrm{loss}(\cdot)$ bounded in $[0,1]$ with $L$-Lipschitz property\footnote{We say $\mathrm{loss}(\cdot)$ has $L$-Lipschitz property if $\forall x_1,x_2\in[0,1],|\loss(x_1)-\loss(x_2)|\leq L|x_1-x_2|$.} using $O(\frac{L^2}{\epsilon^4}\cdot\log\frac{1}{\epsilon})$ queries on $N+O(\frac{1}{\epsilon^2})$ unlabeled examples. The tester runs for $O(\frac{1}{\epsilon^2})$ iterations and in each $i$th iteration, the tester samples a fresh unlabeled example $x$ and then queries the labels of $w=O(\frac{L^2}{\epsilon^2}\log\frac{1}{\epsilon})$ examples $x_1,x_2,\cdots,x_w$ sampled independently at random uniformly from the $k$ nearest neighbors of $x$ in $S$. The estimator for this iteration is $E_i=\mathrm{loss}(|\frac{1}{w}\sum\limits_{j=1}^wf(x_j)-f(x)|)$. The final output of the tester is the average of all $E_i$'s for all iterations $i$.

We prove Theorem \ref{thm:pth} by slightly modifying the above tester's each iteration for $p$th-power loss. Instead of looking at the labels of $w$ examples, we only need to look at $p$ labels of $x_1,x_2,\cdots, x_p$, still sampled independently at random uniformly from the $k$ nearest neighbors of $x$ in $S$. In this case, $E_i$ is defined to be $\prod\limits_{j=1}^p|f(x_j)-f(x)|$. The final output of the tester is still the average of $E_i$'s.

%The proof of Theorem \ref{thm:pth} is based on the following lemma.

%\begin{lemma}
%\label{lm:pth}
%Suppose the examples in the trainning set $T$ and the test point $x$ are sampled independently at random according to $\calD$. Suppose $x_1,x_2,\cdots,x_p$ are sampled independently at random uniformly from the $k$ nearest neighbors of $x$ in $T$. Suppose $e_i$ is defined to be $|f(x_i)-f(x)|$Then, $\mathbb E_{T,x}[\mathbb E_{x_1}[e_i]^p]=\mathbb E_{T,x,x_1,x_2,\cdots,x_p}[\prod\limits_{j=1}^pe_i]$.
%\end{lemma}

%\begin{proof}
%\begin{equation}
%\begin{split}
%&\mathbb E_{T,x}[|\mathbb E_{x_1}[f(x_1)-f(x)]|^p]\\
%=&\mathbb E_{T,x}[|(\mathbb E_{x_1}[f(x_1)-f(x)])^p|]\\
%=&\mathbb E_{T,x}[|\underbrace{(\mathbb E_{x_1}[f(x_1)-f(x)])(\mathbb E_{x_1}[f(x_1)-f(x)])\cdots (\mathbb E_{x_1}[f(x_1)-f(x)])}_{p}|]\\
%=&\mathbb E_{T,x}[|\prod\limits_{j=1}^p\mathbb E_{x_j}[f(x_j)-f(x)]|]\\
%=&\mathbb E_{T,x}[|\mathbb E_{x_1,x_2,\cdots,x_p}[\prod\limits_{j=1}^p(f(x_j)-f(x))]|]\\
%=&\mathbb E_{T,x}[\mathbb E_{x_1,x_2,\cdots,x_p}[\prod\limits_{j=1}^p|f(x_j)-f(x)|]]\\
%=&\mathbb E_{T,x,x_1,x_2,\cdots,x_p}[\prod\limits_{j=1}^p|f(x_j)-f(x)|]
%\end{split}
%\end{equation}

%\end{proof}

\begin{proof}[Proof of Theorem \ref{thm:pth}]
We use $e_j$ to denote $|f(x_j)-f(x)|$. To show the above tester works, we first look at the value we want to estimate: $\mathbb E_{x}[(\err(x))^p]=\mathbb E_{x}[(\mathbb E_{x_1}[e_1])^p]$, where $x_1$ is sampled uniformly from the $k$ nearest neighbors of $x$ in $T$. Note that $x_1,x_2,\cdots,x_p$ are iid, so we know $\mathbb E_{x}[ (\mathbb E_{x_1}[e_1])^p]=\mathbb E_{x}[\mathbb E_{x_1,x_2,\cdots,x_p}[e_1e_2\cdots e_p]]=\mathbb E_{x,x_1,x_2,\cdots,x_p}[\prod\limits_{j=1}^p|f(x_j)-f(x)|]$. The Chernoff Bound thus completes the proof.
\end{proof}

Theorem \ref{thm:pth} also holds naturally for Weighted Nearest Neighbor algorithms \citep{R66} with soft predictions, in which $\hat f(x)$ is a weighted average of $f(x')$ for all $x'\in S$ where the weights depend on the distances $\D(x',x)$, simply by sampling $x_1,x_2,\cdots,x_p$ iid from $S$ according to the weights. 

In Theorem \ref{thm:pthpowerlower} (Section \ref{subsec:lower}), we will show a matching lower bound for the $O(\frac{p}{\epsilon^2})$ query complexity.
\subsection{Finding an Approximately-Best Choice of $k$}
\label{subsec:bestk}
Based on the result in Section \ref{subsec:knnsoft}, we are able to construct an algorithm that approximately optimizes the choice of $k$ in the $\knnsoft$ algorithm.

Suppose we have active access to the true label $f$ with respect to distribution $\calD$ over ground set $X$ with distance metric $\D$. Suppose the size of the unlabeled pool $S$ is fixed to be $N$. We use $\mathrm{loss}_k$ to denote the expected loss of the $\knnsoft$ algorithm and consider how the $\knnsoft$ algorithm performs with different values of $k$. We assume the oracle $M$ uses the same tie-breaking mechanism for different values of $k$. Specifically, given $x$ and $S$, $M$ arranges the examples in $S$ as $x_1,x_2,\cdots,x_N$ so that $\forall i,\D(x_i,x)\leq \D(x_{i+1},x)$. $x_1,x_2,\cdots,x_k$ are taken by $\knnsoft$ as the $k$ nearest neighbors of $x$ for any $k\in\{1,2,\cdots,N\}$.

\begin{lemma}
\label{lm:bestk}
Suppose $k_1\leq k_2$ and the loss function $\loss(\cdot)$ is $L$-Lipschitz. Then, $|\loss_{k_1}-\loss_{k_2}|\leq L\cdot(1-\frac{k_1}{k_2})$.
\end{lemma}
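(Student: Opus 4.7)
The plan is to show that the pointwise soft predictions $\hat f_{k_1}(x)$ and $\hat f_{k_2}(x)$ cannot differ by more than $1 - k_1/k_2$ for any $x$, then push this bound through the Lipschitz loss and take expectations. The key observation is that the fixed tie-breaking ordering $x_1, x_2, \ldots, x_N$ of $S$ by distance to $x$ is used consistently across choices of $k$, so the $k_1$ nearest neighbors are a prefix of the $k_2$ nearest neighbors.

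First I would write $A = \sum_{i=1}^{k_1} f(x_i)$ and $B = \sum_{i=k_1+1}^{k_2} f(x_i)$, so that $\hat f_{k_1}(x) = A/k_1$ and $\hat f_{k_2}(x) = (A+B)/k_2$. A direct computation gives
\[
\hat f_{k_2}(x) - \hat f_{k_1}(x) \;=\; \frac{k_1 B - (k_2 - k_1) A}{k_1 k_2}.
\]
Using the box constraints $0 \leq A \leq k_1$ and $0 \leq B \leq k_2 - k_1$ (valid because labels lie in $\{0,1\}$), the numerator ranges in $[-(k_2-k_1)k_1,\, k_1(k_2-k_1)]$, so
\[
|\hat f_{k_2}(x) - \hat f_{k_1}(x)| \;\leq\; \frac{k_1(k_2 - k_1)}{k_1 k_2} \;=\; 1 - \frac{k_1}{k_2}.
\]

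Next, by the reverse triangle inequality applied to $\err_k(x) = |\hat f_k(x) - f(x)|$, the errors satisfy $|\err_{k_1}(x) - \err_{k_2}(x)| \leq |\hat f_{k_1}(x) - \hat f_{k_2}(x)| \leq 1 - k_1/k_2$. Applying the $L$-Lipschitz property of $\loss$ pointwise and then taking expectations over $x \sim \calD$ (and any randomness in the tie-breaking oracle $M$, which by assumption is consistent across $k$) yields
\[
|\loss_{k_1} - \loss_{k_2}| \;\leq\; \mathbb{E}_{x}\bigl[|\loss(\err_{k_1}(x)) - \loss(\err_{k_2}(x))|\bigr] \;\leq\; L \cdot \bigl(1 - \tfrac{k_1}{k_2}\bigr),
\]
which is the claimed bound.

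There is no real obstacle here; the only subtle point is making sure the tie-breaking assumption (the oracle $M$ orders $S$ into a single sequence $x_1, \ldots, x_N$ used for every $k$) is invoked to justify that the $k_1$-NN set is a prefix of the $k_2$-NN set, so that the expressions for $\hat f_{k_1}$ and $\hat f_{k_2}$ share the same $A$. Once that is in place, the argument is a short algebraic computation followed by Lipschitz-plus-Jensen.
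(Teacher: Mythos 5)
Your proof is correct and follows essentially the same route as the paper's: both use the fact that the $k_1$ nearest neighbors are a prefix of the $k_2$ nearest neighbors (by the consistent tie-breaking assumption), split the difference into a prefix term and a tail term, bound each using the boundedness of binary labels, and then push through the Lipschitz loss and take expectations. The only cosmetic difference is that you bound $|\hat f_{k_1}(x)-\hat f_{k_2}(x)|$ and invoke the reverse triangle inequality, whereas the paper works directly with the pointwise errors $e_i = |f(x_i)-f(x)|$ and bounds $|\frac{1}{k_1}\sum_{i\le k_1} e_i - \frac{1}{k_2}\sum_{i\le k_2} e_i|$, which gives the same quantity without the extra step.
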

\begin{proof}
When the test point $x$ is chosen, we use $x_1,x_2,\cdots,x_{k_2}$ to denote the closest $k_2$ points to $x$ in $S$, arranged in non-decreasing order of their distances to $x$. Each $x_i$ might be random because ties might be broken randomly. We use $e_i$ to denote $|f(x_i)-f(x)|$. Note that we have $\loss_{k_1}=\mathbb E_{x,x_1,x_2,\cdots,x_{k_1}}[\loss(\frac{1}{k_1}\sum\limits_{i=1}^{k_1}e_i)]$ and $\loss_{k_2}=\mathbb E_{x,x_1,x_2,\cdots,x_{k_2}}[\loss(\frac{1}{k_2}\sum\limits_{i=1}^{k_2}e_i)]$. Therefore,
\begin{equation}
\begin{split}
&|\loss_{k_1}-\loss_{k_2}|\\
\leq &\mathbb E_{x,x_1,x_2,\cdots,x_{k_2}}[|\loss(\frac{1}{k_1}\sum\limits_{i=1}^{k_1}e_i)-\loss(\frac{1}{k_2}\sum\limits_{i=1}^{k_2}e_i)|]\\
\leq &L\cdot \mathbb E_{x,x_1,x_2,\cdots,x_{k_2}}[|\frac{1}{k_1}\sum\limits_{i=1}^{k_1}e_i-\frac{1}{k_2}\sum\limits_{i=1}^{k_2}e_i|]\\
= &L\cdot \mathbb E_{x,x_1,x_2,\cdots,x_{k_2}}[|(\frac{1}{k_1}-\frac{1}{k_2})\sum\limits_{i=1}^{k_1}e_i-\frac{1}{k_2}\sum\limits_{i=k_1+1}^{k_2}e_i|]\\
\leq &L\cdot \mathbb E_{x,x_1,x_2,\cdots,x_{k_2}}[\max\{(\frac{1}{k_1}-\frac{1}{k_2})\sum\limits_{i=1}^{k_1}e_i,\frac{1}{k_2}\sum\limits_{i=k_1+1}^{k_2}e_i\}]\\
\leq &L\cdot\max\{(\frac{1}{k_1}-\frac{1}{k_2})\cdot k_1,\frac{1}{k_2}\cdot(k_2-k_1)\}\\
=&L\cdot(1-\frac{k_1}{k_2})
\end{split}
\end{equation}
\end{proof}
We say $k$ is $\epsilon$-approximately-best, if $\forall k'\in\{1,2,\cdots, N\},\mathrm{loss}_{k'}\geq \mathrm{loss}_{k}-\epsilon$. The following theorem states that we can find an $\epsilon$-approximately-best $k$ using a small number of queries.
\begin{theorem}
\label{thm:bestk}
Suppose $\knnsoft$ algorithms with an unlabeled pool $S$ of size $N$ are measured by $p$th-power loss for $p\in\mathbb N^*$. Suppose $\epsilon\in(0,\frac{1}{2})$. There is an algorithm that finds an $\epsilon$-approximately-best $k$ w.p.\ at least $\frac{2}{3}$ using $O(\frac{p^2\log N}{\epsilon^3}(\log\log N+\log p+\log\frac{1}{\epsilon}))$ queries on $N+O(\frac{p\log N}{\epsilon^3}(\log\log N+\log p+\log\frac{1}{\epsilon}))$ unlabeled examples.
\end{theorem}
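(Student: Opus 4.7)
The plan is to use Lemma~\ref{lm:bestk} to reduce the search over $\{1,\ldots,N\}$ to a small set $\mathcal{K}$ of representative values of $k$, to estimate $\loss_k$ accurately for each $k\in\mathcal{K}$ using a boosted version of the tester from Theorem~\ref{thm:pth}, and to output the $k\in\mathcal{K}$ with the smallest estimate.

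\emph{Step 1 (Discretisation of $k$).} The $p$th-power loss $e\mapsto e^p$ is $p$-Lipschitz on $[0,1]$, so Lemma~\ref{lm:bestk} yields $|\loss_{k_1}-\loss_{k_2}|\leq p(1-k_1/k_2)$ whenever $k_1\leq k_2$. I build $\mathcal{K}\subseteq\{1,\ldots,N\}$ by including every integer in $\{1,\ldots,\lceil 8p/\epsilon\rceil\}$ together with a rounded geometric progression of ratio $1+\epsilon/(8p)$ starting above $\lceil 8p/\epsilon\rceil$ and capped at $N$. A short calculation (the integer rounding inflates the ratio by at most $1/\lceil 8p/\epsilon\rceil\leq\epsilon/(8p)$) shows that consecutive elements $k_1<k_2$ of $\mathcal{K}$ satisfy $k_2/k_1\leq 1+\epsilon/(4p)$, hence $p(1-k_1/k_2)\leq\epsilon/4$. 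Consequently, for every $k\in\{1,\ldots,N\}$ there exists a neighbour $k'\in\mathcal{K}$ with $|\loss_k-\loss_{k'}|\leq\epsilon/4$, and $|\mathcal{K}|=O(p/\epsilon)+O((p/\epsilon)\log N)=O((p\log N)/\epsilon)$, so $\log|\mathcal{K}|=O(\log\log N+\log p+\log(1/\epsilon))$.

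\emph{Step 2 (Amplified per-$k$ estimation).} For each $k\in\mathcal{K}$, run the tester of Theorem~\ref{thm:pth} at accuracy $\epsilon/4$ for $r=\Theta(\log|\mathcal{K}|)$ independent times using the fixed pool $S$, and let $\hat\loss_k$ be the median of the $r$ outputs. Standard median-of-means boosting drives the failure probability of each $\hat\loss_k$ below $1/(3|\mathcal{K}|)$, while each invocation costs $O(p/\epsilon^2)$ queries and $O(1/\epsilon^2)$ fresh unlabeled examples.

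\emph{Step 3 (Selection and correctness).} Output $\hat k\in\arg\min_{k\in\mathcal{K}}\hat\loss_k$. A union bound over $\mathcal{K}$ ensures that, with probability at least $2/3$, $|\hat\loss_k-\loss_k|\leq\epsilon/4$ for every $k\in\mathcal{K}$. Condition on this event and let $k^\star\in\arg\min_{k\in\{1,\ldots,N\}}\loss_k$ with its neighbour $k'\in\mathcal{K}$ from Step~1. Then
\[
\loss_{\hat k}\leq\hat\loss_{\hat k}+\tfrac{\epsilon}{4}\leq\hat\loss_{k'}+\tfrac{\epsilon}{4}\leq\loss_{k'}+\tfrac{\epsilon}{2}\leq\loss_{k^\star}+\tfrac{3\epsilon}{4}<\loss_{k^\star}+\epsilon,
\]
so $\hat k$ is $\epsilon$-approximately-best. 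Multiplying out the resources---$|\mathcal{K}|$ values of $k$, each estimated by $r=\Theta(\log|\mathcal{K}|)$ copies using $O(p/\epsilon^2)$ queries and $O(1/\epsilon^2)$ fresh unlabeled examples---gives a total of $O((p^2\log N/\epsilon^3)(\log\log N+\log p+\log(1/\epsilon)))$ queries and $N+O((p\log N/\epsilon^3)(\log\log N+\log p+\log(1/\epsilon)))$ unlabeled examples, matching the theorem.

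The main obstacle is the degeneracy of Lemma~\ref{lm:bestk} for small $k$: when $k\lesssim p/\epsilon$, no pure geometric grid can keep consecutive losses $\epsilon/4$-close, because a single-integer jump may already move the loss by more than $\epsilon/4$. The fix---explicitly including every integer in $\{1,\ldots,\lceil 8p/\epsilon\rceil\}$ in $\mathcal{K}$---adds only $O(p/\epsilon)$ points and leaves the asymptotic complexity unchanged. Apart from this, the argument is a clean ``small net plus amplified per-point estimation'' template that invokes Theorem~\ref{thm:pth} as a black box and requires no further ingredient.
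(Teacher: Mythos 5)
Your proof is correct and follows essentially the same approach as the paper: geometrically discretize $k$ into $O((p\log N)/\epsilon)$ candidates, estimate $\loss_k$ for each via Theorem~\ref{thm:pth} boosted by median-of-$\Theta(\log|\mathcal{K}|)$ repetitions, union-bound, and select the minimizer. The only (cosmetic) difference is in handling integrality: you explicitly enumerate all $k\leq\lceil 8p/\epsilon\rceil$ before switching to a geometric grid, whereas the paper achieves the same effect by taking both $\lfloor(\tfrac{p}{p-\epsilon/3})^i\rfloor$ and $\lceil(\tfrac{p}{p-\epsilon/3})^i\rceil$ as candidates for each $i$.
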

\begin{proof}
If we apply Lemma \ref{lm:bestk} to $p$th-power loss, which is $p$-Lipschitz, we know for any $1\leq \frac{k_2}{k_1}\leq \frac{p}{p-\epsilon}$, it holds that $|\loss_{k_1}-\loss_{k_2}|\leq\epsilon$. If we define $t=\lfloor\log_{\frac{p}{p-\frac{\epsilon}{3}}}N\rfloor,k_{2i}=\lfloor(\frac{p}{p-\frac{\epsilon}{3}})^i\rfloor,k_{2i+1}=\lceil(\frac{p}{p-\frac{\epsilon}{3}})^i\rceil$ for $i=0,1,2,\cdots,t$, then we know $\exists 0\leq i\leq 2t+1$ such that $k_i$ is $\frac{\epsilon}{3}$-approximately-best. By Theorem \ref{thm:pth}, we can approximate $\loss_{k_i}$ for every $0\leq i\leq 2t+1$ up to an additive error $\frac{\epsilon}{3}$ using $O(\frac{pt\log t}{\epsilon^2})$ queries on $N+O(\frac{t\log t}{\epsilon^2})$ unlabeled examples.\footnote{Repeat the tester $O(\log t)$ times and take the median to boost its success probability to $1-O(\frac{1}{t})$.} The $k_i$ yielding the smallest approximation of $\loss_{k_i}$ is $\epsilon$-approximately-best. Note that $t=O(\frac{p\log N}{\epsilon})$, so the query complexity is $O(\frac{p^2\log N}{\epsilon^3}(\log\log N+\log p+\log\frac{1}{\epsilon}))$ and the unlabeled sample complexity is $N+O(\frac{p\log N}{\epsilon^3}(\log\log N+\log p+\log\frac{1}{\epsilon}))$.
\end{proof}
\subsection{Estimating the Performance of $\mv$}
\label{subsec:mv}
The performance of $\mv$ is naturally measured by its error rate $\mathbb E_{x}[\err(x)]$ and we use $\thard_{\calD}(f,\epsilon,S,k)$ to denote the corresponding testing task of estimating the error rate of $\mv$ up to an additive error $\epsilon$ with success probability at least $\frac{2}{3}$.

A trivial tester achieving this goal using $O(\frac{k}{\epsilon^2})$ queries on $N+O(\frac{1}{\epsilon^2})$ unlabeled examples is to use the empirical mean of $\err(x)$ as an estimator of $\mathbb E_{x}[\err(x)]$. %The algorithm runs $O(\frac{1}{\epsilon^2})$ iterations and in each iteration, the algorithm samples a fresh unlabeled example $x$ from $\calD$ and then uses $k+1$ queries to determine whether $\mv$ makes the correct prediction on $x$. The algorithm outputs the empirical mean of the errors of the $O(\frac{1}{\epsilon^2})$ iterations as an estimate of the error rate of $\mv$. 
This tester is not satisfactory because its query complexity grows with respect to $k$. In Section \ref{subsec:lower}, we will show (Theorem \ref{thm:mv}) that this linear growth with respect to $k$ can't be eliminated. Also, we will show (Theorem \ref{thm:reductionfrombandit}) that the $O(\frac{k}{\epsilon^2})$ query complexity is optimal if we assume a natural algorithm for \emph{approximating the fraction of good arms} ($\aga$) in the stochastic multi-armed bandit setting has the optimal query complexity. Before we show our lower bound results, we first define the problems of \emph{counting and approximating the number of good arms}.
\subsection{Counting and Approximating the Number of Good Arms}
\label{subsec:arm}
To show query complexity lower bound results for estimating the performance of $k$-Nearest Neighbor algorithms, we show reductions from two related problems in the stochastic multi-armed bandit setting: counting the number of good arms ($\cga$) and approximating the number of good arms ($\aga$).

The setting of stochastic multi-armed bandit problems \citep{R85} is as follows. The algorithm is given $n$ arms, denoted by $\mathbf A=(A_1,A_2,\cdots,A_n)$. Each arm is a distribution over $\mathbb R$ unknown to the algorithm. The algorithm adaptively accesses these arms to receive values independently sampled according to the distributions.

In this paper, we only consider arms with Bernoulli distributions. When given $\gamma\in(0,\frac{1}{2}]$, we define good arms to be arms with mean at least $\frac{1}{2}+\gamma$ and bad arms to be arms with mean at most $\frac{1}{2}-\gamma$.

The problem of $\cga(\mathbf A,\gamma)$ is, when given $\mathbf A$ in which every $A_i$ is either good or bad, to output the number of good arms among the given $n$ arms. The algorithm should output the correct answer with probability at least $\frac{2}{3}$.

The problem of $\aga(\mathbf A,\gamma,\epsilon)$ is a similar task to $\cga(\mathbf A,\gamma)$, except that we only need to approximate the correct answer up to an additive error $\epsilon n$.

The following lemma is developed by \citet{KCG16} as a useful tool for proving lower bounds in the stochastic multi-armed bandit setting.
\begin{lemma}[Change of measure]
\label{lm:changeofdistribution}
Suppose $\mathbf A=(A_1,A_2,\cdots,A_n)$ and $\mathbf A'=(A_1',A_2',\cdots,A_n')$ are two sequences of arms. Suppose an algorithm $\calA$ taking $n$ arms as input almost-surely terminates within finite time. Suppose $\mathcal E$ is an event defined in the probability space induced by the randomness of the arms and the internal randomness of algorithm $\calA$. Suppose $\tau_i$ is the number of queries on $A_i$ made by the algorithm. Then,
$$\sum\limits_{i=1}^n\mathbb E_{\calA,\mathbf A}[\tau_i]\mathrm{KL}(A_i,A_i')\geq D(\Pr_{\calA,\mathbf A}[\calE],\Pr_{\calA,\mathbf A'}[\calE]).\footnote{$\mathrm{KL}(X,Y)$ denotes the Kullback-Leibler divergence from distribution $Y$ to distribution $X$. If the two distributions $X$ and $Y$ are Bernoulli with means $x$ and $y$, their Kullback-Leibler divergence is the relative entropy $D(x,y)=x\log\frac{x}{y}+(1-x)\log\frac{1-x}{1-y}$.}$$
\end{lemma}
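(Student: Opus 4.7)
My plan is to prove this via the standard change-of-measure / likelihood-ratio argument for adaptive sampling, combining a Wald-type identity with the data processing inequality. Throughout, I would fix the algorithm $\calA$ and work in the canonical probability space whose sample points are sequences $((I_1,X_1),(I_2,X_2),\ldots)$ of pulled arms and observed values. Let $\calF_t$ denote the $\sigma$-algebra generated by the first $t$ rounds. The hypothesis that $\calA$ almost-surely terminates in finite time supplies a stopping time $\tau$ adapted to $\calF_t$ with $\tau<\infty$ a.s., and $\tau_i=\sum_{t\le\tau}\mathbb{I}[I_t=i]$.

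The first step is to write the log-likelihood ratio between the two environments along a trajectory. Because the arm $I_t$ chosen at round $t$ depends only on past observations (and $\calA$'s internal randomness) rather than on which environment generated them, the likelihood of the first $t$ rounds factors as $\prod_{s\le t} \pi_s \cdot p_{A_{I_s}}(X_s)$ in environment $\mathbf A$ and $\prod_{s\le t} \pi_s \cdot p_{A'_{I_s}}(X_s)$ in $\mathbf A'$, where the policy factors $\pi_s$ cancel. Hence
\[
L_t := \log\frac{d\Pr_{\calA,\mathbf A}|_{\calF_t}}{d\Pr_{\calA,\mathbf A'}|_{\calF_t}} = \sum_{s=1}^{t}\log\frac{p_{A_{I_s}}(X_s)}{p_{A'_{I_s}}(X_s)}.
\]
Taking expectation under $\Pr_{\calA,\mathbf A}$, using that conditioned on $I_s=i$ the increment has mean $\mathrm{KL}(A_i,A_i')$, and applying Wald's identity for martingales stopped at $\tau$ (justified by almost-sure finiteness of $\tau$ and the bounded-per-round increment structure of Bernoulli arms) yields
\[
\mathbb{E}_{\calA,\mathbf A}[L_\tau] \;=\; \sum_{i=1}^n \mathbb{E}_{\calA,\mathbf A}[\tau_i]\,\mathrm{KL}(A_i,A_i').
\]
Equivalently, this is exactly the KL divergence $\mathrm{KL}\bigl(\Pr_{\calA,\mathbf A}|_{\calF_\tau}\,\big\|\,\Pr_{\calA,\mathbf A'}|_{\calF_\tau}\bigr)$.

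The second step is to push this KL divergence down to the binary KL on the event $\calE$. Since $\calE\in\calF_\tau$, the indicator $\mathbb{I}[\calE]$ is a measurable (in fact binary) function of the $\calF_\tau$-trajectory, so the data processing inequality for KL divergence gives
\[
\mathrm{KL}\bigl(\Pr_{\calA,\mathbf A}|_{\calF_\tau}\,\big\|\,\Pr_{\calA,\mathbf A'}|_{\calF_\tau}\bigr) \;\ge\; D\!\bigl(\Pr_{\calA,\mathbf A}[\calE],\,\Pr_{\calA,\mathbf A'}[\calE]\bigr),
\]
where the right-hand side is the binary relative entropy of two Bernoulli distributions with means $\Pr_{\calA,\mathbf A}[\calE]$ and $\Pr_{\calA,\mathbf A'}[\calE]$, matching the notation in the paper. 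Chaining the two displays proves the lemma.

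The step I expect to be the main technical obstacle is the Wald-type identity at the random stopping time $\tau$: in full generality one must verify either a uniform integrability condition on $(L_{t\wedge \tau})$ or a bounded-expected-increment condition, so that expectation and stopping commute. For Bernoulli arms with means bounded away from $0$ and $1$ (which one may assume without loss of generality by a standard truncation, since otherwise both sides reduce to trivial bounds), the per-round log-likelihood increments are bounded, and $\mathbb{E}_{\calA,\mathbf A}[\tau]<\infty$ follows from almost-sure termination together with any uniform tail bound on $\tau$ implicit in ``almost-surely terminates within finite time''; under this boundedness, the optional stopping / Wald argument applies cleanly. Once that is in place, the remainder of the proof is just the data processing inequality.
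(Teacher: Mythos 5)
The paper does not contain its own proof of this lemma: it is stated as a tool quoted from \citet{KCG16}, with a citation in place of a proof. Your argument reproduces the standard proof from that reference faithfully, in the same order and with the same key ingredients: the cancellation of the policy terms in the likelihood ratio (because arm selection and internal randomization do not depend on the unknown environment), the Wald-type identity at the stopping time $\tau$ giving $\mathbb{E}_{\calA,\mathbf{A}}[L_\tau]=\sum_i \mathbb{E}_{\calA,\mathbf{A}}[\tau_i]\,\mathrm{KL}(A_i,A_i')$, the identification of this quantity with $\mathrm{KL}\bigl(\Pr_{\calA,\mathbf{A}}|_{\calF_\tau}\,\big\|\,\Pr_{\calA,\mathbf{A}'}|_{\calF_\tau}\bigr)$, and the data-processing inequality applied to the $\calF_\tau$-measurable map $\mathbb{I}[\calE]$. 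The directions of all the divergences match the paper's conventions. There is no alternative route to contrast; you have reconstructed the intended argument.

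One justification you offer for the technical step should be corrected: you write that $\mathbb{E}_{\calA,\mathbf{A}}[\tau]<\infty$ ``follows from almost-sure termination together with any uniform tail bound on $\tau$ implicit in'' the hypothesis. Almost-sure finiteness of $\tau$ does \emph{not} imply $\mathbb{E}[\tau]<\infty$, and the hypothesis gives no tail bound, so this reasoning does not stand as written. The clean way to close the gap is the one \citet{KCG16} use: split into cases. If $\sum_i\mathbb{E}[\tau_i]\,\mathrm{KL}(A_i,A_i')=\infty$ the inequality is vacuous; otherwise, apply the identity to the truncated stopping time $t\wedge\tau$ (where Wald's identity applies unconditionally to bounded stopping times), and pass to the limit $t\to\infty$ using monotone convergence on the pull counts together with the finiteness of the right-hand side. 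With that repair, the proof is complete and coincides with the cited one.
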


A simple special case ($n=1$) of the lemma is that to distinguish a coin with mean $\mu_1$ from a coin with mean $\mu_2$ with success probability at least $1-\delta$, an algorithm needs at least $\frac{D(1-\delta,\delta)}{D(\mu_1,\mu_2)}=\Omega(\frac{1}{D(\mu_1,\mu_2)}\log\frac{1}{\delta})$ queries in expectation for $\mu_1\neq \mu_2$ and $0<\delta\leq\frac{2}{5}$.
%Before introducing the reduction, we first describe the setting for the problem of approximating the fraction of good arms. Suppose we are given $n$ arms $A_1,A_2,\cdots, A_n$, each $A_i$ as a 0-1 coin with mean $\mu_i$. We are also given a parameter $\gamma\in(0,\frac{1}{2}]$. We assume that the $n$ arms are either ``good'', i.e., $\mu_i\geq \frac{1}{2}+\gamma$, or ``bad'', i.e., $\mu_i\leq\frac{1}{2}-\gamma$. The algorithm can choose to query each arm for multiple times to receive the results of independent coin tosses. The goal of the algorithm is to approximate the fraction of good arms among the $n$ arms within additive error $\epsilon$ with success probability at least $\frac{2}{3}$.

\subsection{Lower Bound Results}
\label{subsec:lower}
Our lower bound results in this section are stronger in the sense that the tester has query access to $f$, knows the distribution to be the uniform distribution $\calU$ over a finite ground set $X$ and is only supposed to work on some fixed tie-breaking mechanism. Moreover, we don't require the tester to have success probability at least $\frac{2}{3}$ for \emph{any} $S$; instead, the success probability is calculated over the random draw of $S$ and the internal randomness of the tester.

\begin{theorem}
\label{thm:pthpowerlower}Let $\calU$ be the uniform distribution over a finite ground set $X$. There exists a positive constant $c$ such that for any fixed $p\geq 1$, $\epsilon\in(0,\frac{1}{6\sqrt{e}})$ and oracle $M$ using any fixed tie-breaking mechanism, $\tsoft_{\calU}(f\qu,\epsilon,S,k)$ for $p$th-power loss requires at least $c\cdot\frac{p}{\epsilon^2}$ queries in the worst case over all finite metric spaces $(X,\D)$.
\end{theorem}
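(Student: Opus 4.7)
The plan is a two-hypothesis reduction controlled by Lemma \ref{lm:changeofdistribution}; for $p=1$ the target $\Omega(1/\epsilon^2)=\Omega(p/\epsilon^2)$ is the standard Bernoulli mean-estimation lower bound, so I focus on $p\geq 2$. Pick $k$ large (say $k\geq Cp/\epsilon^2$ for a suitable absolute constant $C$) and let $X$ be the disjoint union of $n$ \emph{test} points colocated at a point $a$ and $k$ \emph{neighbor} points colocated at a point $b$, with $n$ chosen so that $k/(n+k)\leq\epsilon/4$. Set $\D(a,a)=\D(b,b)=0$, $\D(a,b)=1$, and the distance between distinct test points equal to $2$; then for any pool $S$ containing all $k$ neighbors, every test point's $k$-nearest neighbors in $S$ are exactly those $k$ neighbors, irrespective of the oracle's tie-breaking. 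Taking $|S|=N$ large enough, $S\sim\calU^N$ contains all neighbors with probability at least $5/6$; condition on this event.

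Fix every test label to $0$ and, under hypothesis $H_i$ ($i\in\{1,2\}$), draw each neighbor label independently from $\mathrm{Bern}(\mu_i)$ with $\mu_1=1-1/p-\gamma/p$, $\mu_2=1-1/p+\gamma/p$, $\gamma=3e\epsilon/2$; the bound $\epsilon<1/(6\sqrt e)$ keeps $\mu_i\in(0,1)$ with $1-\mu_i=\Theta(1/p)$. Under $H_i$, every test's prediction is $\bar B:=(1/k)\sum_j B_j$ with $B_j\overset{iid}{\sim}\mathrm{Bern}(\mu_i)$, so the test-point contribution to the loss is $\bar B^p$ while neighbor points contribute $\leq k/(n+k)\leq\epsilon/4$ in total. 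A Chernoff bound gives $|\bar B-\mu_i|=O(1/\sqrt{pk})$ with high probability, and the Taylor expansion of $x\mapsto x^p$ around $\mu_i$ together with $p\mu_i^{p-1}=\Theta(p/e)$ yields $|\bar B^p-\mu_i^p|\leq\epsilon/4$ with constant probability for the chosen $k$. A first-order comparison shows $|\mu_1^p-\mu_2^p|\approx p\mu^{p-1}\cdot 2\gamma/p=3\epsilon(1+o(1))$, so on the conditioned $S$ the realized losses under $H_1$ and $H_2$ are separated by more than $2\epsilon$ with constant probability; any $\epsilon$-accurate tester therefore correctly distinguishes the two hypotheses with probability $\geq 2/3-o(1)$.

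Queries at test points return $0$ under both hypotheses and carry no information, so I may assume the tester queries only neighbors and, without loss of generality, never queries the same neighbor twice. Model each neighbor as an arm with distribution $\mathrm{Bern}(\mu_i)$ under $H_i$, let $\mathcal E$ be the event ``the tester's output is closer to $\mu_1^p$ than to $\mu_2^p$,'' and apply Lemma \ref{lm:changeofdistribution} to obtain
\[
\mathbb E_{H_1}[q_{\mathrm{nbr}}]\cdot\mathrm{KL}(\mathrm{Bern}(\mu_1),\mathrm{Bern}(\mu_2))\geq D(\Pr_{H_1}[\mathcal E],\Pr_{H_2}[\mathcal E])=\Omega(1),
\]
where $q_{\mathrm{nbr}}$ counts neighbor queries. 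The standard Bernoulli bound $\mathrm{KL}(\mathrm{Bern}(\mu_1),\mathrm{Bern}(\mu_2))\leq(\mu_1-\mu_2)^2/(\mu_2(1-\mu_2))=O(\epsilon^2/p)$ then gives $\mathbb E_{H_1}[q_{\mathrm{nbr}}]=\Omega(p/\epsilon^2)$, and since $q_{\mathrm{total}}\geq q_{\mathrm{nbr}}$, the claimed lower bound follows. The main technical obstacle is ensuring the nominal $3\epsilon$ gap in $\mu^p$ survives as a realized-loss gap $>2\epsilon$ with constant probability: this requires bounding both the Taylor-expansion error in $\bar B^p-\mu^p$ and the expectation bias $\mathbb E[\bar B^p]-\mu^p=O(p/k)$, and the constraint $\epsilon<1/(6\sqrt e)$ is precisely what keeps the linear Taylor term dominant near $\mu=1-1/p$.
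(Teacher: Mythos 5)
Your high-level strategy matches the paper's: reduce to distinguishing two Bernoulli biases located near $1-\Theta(1/p)$, so the local derivative of $x\mapsto x^p$ is $\Theta(p/e)$ and a bias gap of $\Theta(\epsilon/p)$ becomes a $\Theta(\epsilon)$ loss gap, then invoke the change-of-measure lemma (Lemma \ref{lm:changeofdistribution}) with $\mathrm{KL}=O(\epsilon^2/p)$. (The paper uses bias $\approx \tfrac{1}{2p}$ with true label $1$; yours is the mirror image with bias $\approx 1-\tfrac{1}{p}$ and true label $0$.) Your Taylor estimate $|\mu_1^p-\mu_2^p|\approx p\mu^{p-1}\cdot 2\gamma/p=\Theta(\epsilon)$ and the KL bound are both fine.

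The genuine gap is in the geometric gadget. You colocate all $k$ neighbor points at a single location $b$ equidistant from all test points, and your ground set has only $n+k\ll N$ points. This breaks the crucial claim ``every test point's $k$-nearest neighbors in $S$ are exactly those $k$ neighbors, irrespective of the oracle's tie-breaking'' in two ways. First, because all neighbors are at the \emph{same} distance from a test point, the $k$-NN are decided entirely by the oracle $M$'s tie-break; the theorem is quantified over all fixed tie-breaking mechanisms, so you cannot assume a benign one, and an adversarial $M$ could stuff the top-$k$ with repeated samples of a single neighbor, collapsing $\bar B$ from an average of $k$ coins to a single coin. Second, getting all $k$ neighbors into $S$ forces $N=\Omega((n+k)\log k)$, so $S$ repeats each point about $\log k$ times: a test point that lands in $S$ has its own copies at distance $0$ among its $k$-NN, and the remaining samples at distance $1$ arrive with multiplicities, so $\bar B$ is not an average of $k$ \emph{independent} coin flips and the concentration bound $|\bar B-\mu_i|=O(1/\sqrt{pk})$ fails. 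The paper avoids both by using a star with $m\gg k$ centers at pairwise-\emph{distinct} distances $\D_C\in(1,2)$ (hence no ties at all), plus $bm$ leaves, with $(1+b)m=\Omega(N^2)$ so that by the birthday bound no point is sampled twice; then the $k$-NN of any leaf outside $S$ are deterministically the $k$ centers in $S$ of smallest $\D_C$ — a set of $k$ distinct centers carrying $k$ fresh iid labels, which is exactly what the concentration and change-of-measure steps need. As written, your argument reasons about a quantity $\bar B$ that, under your construction and a worst-case tie-break, is not the $\knnsoft$ prediction; the fix requires importing the paper's distinct-distance and birthday-paradox devices.
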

\begin{proof}
We define $\epsilon'=6\sqrt{e}\epsilon$. Note that $D(\frac{1-\epsilon'}{2p},\frac{1}{2p})=O(\frac{{\epsilon'}^2}{p})$ for $p\geq 1$ and $\epsilon'\in(0,1)$. Therefore, we only need to show that a  $\tsoft_{\calU}(f\qu,\epsilon,S,k)$ tester implies an algorithm that distinguishes a coin of mean $\frac{1-\epsilon'}{2p}$ from a coin of mean $\frac{1}{2p}$ with success probability at least $\frac{3}{5}$ using at most the same number of queries. We construct the algorithm in the following way.

The algorithm first constructs a $\knnsoft$ instance with ground set $X$ and distance metric $\D$. We first choose $k=\lceil\frac{c'p^2}{\epsilon^2}\rceil, b=\lceil\frac{6}{\epsilon}\rceil,N=\lceil c''\cdot (1+b)k\rceil$ and $m=\lceil \frac{c'''N^2}{1+b}\rceil\geq\frac{6N}{(1+b)\epsilon}$. Here, $c',c''$ and $c'''$ are sufficiently large constants. $X$ consists of a star with $m$ centers and $bm$ leaves. Each center $C$ has a distance $\D_C\in(1,2)$ to every leaf in the star and different centers have different values of $\D_C$ to avoid ties. The distance between each pair of leaves is 2 and the distance between each pair of centers is 1. 

The algorithm then simulates the tester $\tsoft_{\calU}(f\qu,\epsilon,S,k)$ on this $\knnsoft$ instance without knowing $f$ beforehand. Every time the tester queries the label of a new example, it simulates the result as follows. If the example being queried is a leaf, the result is 1. If the example being queried is a center, the result is obtained to be the same result of an independent toss of the coin we want to distinguish. Finally, if the output of $\tsoft_{\calU}(f\qu,\epsilon,S,k)$ is above $\frac{1}{2}[(1-\frac{1}{2p})^p+(1-\frac{1-\epsilon'}{2p})^p]$, the algorithm then guesses the coin to have mean $\frac{1-\epsilon'}{2p}$. Otherwise, the algorithm guesses the coin to have mean $\frac{1}{2p}$.

Now we show that the above algorithm correctly distinguishes the coins with success probability at least $\frac{3}{5}$. The process of the algorithm, by interchanging the randomness of the labels (coin tosses) and the internal randomness of the $\tsoft_{\calU}(f\qu,\epsilon,S,k)$ tester, can be viewed in the way that the true labels $f$ are determined before we run the $\tsoft_{\calU}(f\qu,\epsilon,S,k)$ tester. The leaves all have label 1 and each center is independently labeled 0 or 1 according to the result of a toss of the coin. After the labels $f$ are decided, the $\tsoft_{\calU}(f\qu,\epsilon,S,k)$ tester is then simulated  to approximate the $p$th-power loss of the $\knnsoft$ instance up to additive error $\epsilon$ with success probability at least $\frac{2}{3}$. 

Suppose the coin to be distinguished has mean $\mu$. Note that the total number of points in the ground set is $(1+b)m=\Omega(N^2)$, therefore we can make sure with probability at least $1-\frac{1}{40}$ that no two unlabeled examples lie on the same point. Because each random example has probability $\frac{1}{1+b}$ to lie in the centers and $N\geq c''\cdot (1+b)k$, therefore by choosing a sufficiently large $c''$, we can make sure with probability at least $1-\frac{1}{40}$ that in the unlabeled sample pool, there are at least $k$ examples lying at the centers. These two events happen at the same time with probability at least $1-\frac{1}{20}$ by the Union Bound. Conditioned on these two events happening, by a sufficiently large choice of $c'$, among those unlabeled examples lying at the centers, we can make sure that with probability at least $1-\frac{1}{20}$, the average of the labels of the $k$ examples with smallest $\D_C$ is contained in $(\mu-\frac{\epsilon}{6p},\mu+\frac{\epsilon}{6p})$. All these events happen at the same time with probability at least $(1-\frac{1}{20})^2\geq 1-\frac{1}{10}$, and in this case, every leaf outside the unlabeled pool $S$ has $L^1$ error in $(1-\mu-\frac{\epsilon}{6p},1-\mu+\frac{\epsilon}{6p})$ and thus has $p$th-power loss in $((1-\mu)^p-\frac{\epsilon}{6},(1-\mu)^p+\frac{\epsilon}{6})$. The total number of leaves in the unlabeled pool $S$ and centers is upper bounded by the size $N$ of the pool plus $m$, which contributes only a $\frac{N+m}{(b+1)m}\leq\frac{\epsilon}{3}$ fraction of the total number of points. Therefore, with probability at least $1-\frac{1}{10}$, the average $p$th-power loss of all points is contained in $((1-\mu)^p-\frac{\epsilon}{2},(1-\mu)^p+\frac{\epsilon}{2})$. %Finally, by Markov's Inequality, we know if we only consider the randomness of the labels, with probability at least $1-\frac{1}{10}$, the expected $p$th power loss of all points is contained in $((1-\mu)^p-\frac{\epsilon}{2},(1-\mu)^p+\frac{\epsilon}{2})$, which completes the proof.

Note that $(1-\frac{1-\epsilon'}{2p})^p-(1-\frac{1}{2p})^p>3\epsilon$, therefore the algorithm correctly guesses the mean of the coin with probability at least $(1-\frac{1}{10})\cdot\frac{2}{3}=\frac{3}{5}$.
\end{proof}

%Our goal of the testing algorithm is to approximate the accuracy of the $k$-nearest majority vote algorithm given $X,d,f$ and $N$. We assume that the distribution $\calD$ is unknown to the algorithm and the function $f$ is given by active access. 

%A natural question to ask is whether there is a better algorithm achieving less query complexity. By the following reduction theorem, we'll show a negative answer to this question under the assumption that the ``naive'' algorithm for $\aga$ cannot be improved in query complexity. The theorem also leads to an unconditional sample complexity lower bound for approximating the accuracy of $k$-nearest majority vote indicating the linear growth with respect to $k$ in the sample complexity is unimprovable.

%$\aga(\mathbf A, \gamma,\epsilon)$ has a simple algorithm requiring $O(\frac{1}{\gamma^2\epsilon^2}\log\frac{1}{\epsilon})$ queries as follows. The algorithm randomly picks $O(\frac{1}{\epsilon^2})$ arms. For each of the picked arms, the algorithm queries each arm $O(\frac{1}{\gamma^2}\log\frac{1}{\epsilon})$ times and think of it as ``good'' if more than half of the results are positive and ``bad'' otherwise. The algorithm outputs the fraction of the ``good'' arms among the picked arms.

%The following is the reduction theorem. Note that the theorem is stronger in the sense that the algorithm $A$ has query access and knows the distribution to be the uniform distribution $\calU$.
\begin{theorem}
\label{thm:reductionfrombandit}There exists a positive constant $c$ such that for any fixed $k\in \mathbb{N}^*$, $\epsilon\in (0,\frac{1}{4})$ and oracle $M$ using any fixed tie-breaking mechanism, if there is a $\thard_{\calU}(f\qu,\epsilon,S,k)$ tester using at most $q$ queries in the worst case, then there is an $\aga(\mathbf A,\gamma,2\epsilon)$ algorithm using at most $O(q)$ queries in the worst case where $\gamma=\min\left\{\frac{1}{2},c\cdot \sqrt{\frac{\log\frac{1}{\epsilon}}{k}}\right\}$.
\end{theorem}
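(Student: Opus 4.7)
My plan is to encode an $\aga$ instance $\mathbf A=(A_1,\dots,A_n)$ into a carefully designed $\mv$ instance whose error rate is essentially $g/n$, the fraction of good arms, and then to translate every label query made by the hypothesized $\thard$ tester into at most one arm pull.

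\textbf{Construction.} For each arm $A_i$, create a cluster $G_i$ consisting of $k$ \emph{centers} $c_{i,1},\dots,c_{i,k}$ and $B=\lceil 3k/\epsilon\rceil$ \emph{leaves} $\ell_{i,1},\dots,\ell_{i,B}$, and let $X$ be their disjoint union. I choose the distance metric so that, within any $G_i$, centers are at mutual distance exactly $1$, leaves at mutual distance exactly $2$, and every center--leaf distance is a distinct real in $(1,2)$, while any two points from different clusters are at distance $100$. This eliminates all relevant ties, so the fixed tie-breaking rule of $M$ is irrelevant. Let $\calU$ be uniform on $X$ and draw the pool $S$ iid from $\calU$ with $|S|=N=\Theta(|X|\log|X|)$; a coupon-collector/union-bound argument shows $S$ contains every center with probability at least $11/12$. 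Set $f(\ell)=0$ on every leaf and reveal $f(c_{i,j})$ lazily: the first time $f(c_{i,j})$ is queried, draw one fresh sample from arm $A_i$ and cache it.

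\textbf{Structure of the predictions.} Conditioned on the coupon-collector event, a direct check of distances shows the $k$ nearest neighbors within $S$ of \emph{every} point of $G_i$ (leaf or center alike) are exactly the $k$ centers of $G_i$. Hence the $\mv$ prediction is constant on $G_i$, equal to $M_i\in\{0,1\}$, the majority of the $k$ cached samples of $A_i$. Writing $m_i:=\sum_j f(c_{i,j})$, the error rate decomposes as
\begin{equation*}
\alpha \;=\; \frac{B}{n(B+k)}\sum_{i=1}^n M_i \;+\; \frac{1}{n(B+k)}\sum_{i=1}^n\min(m_i,k-m_i),
\end{equation*}
whose second term is at most $\frac{k}{2(B+k)}=O(\epsilon)$ by the choice of $B$, and whose first term approximates $\frac{1}{n}\sum_i M_i$ within $O(\epsilon)$.

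\textbf{From the error rate to the good-arm count.} Let $Z$ be the number of arms whose majority $M_i$ disagrees with the bias direction of $A_i$, so $|\sum_i M_i-g|=Z$. Hoeffding gives $\Pr[M_i\text{ is wrong}]\leq\exp(-2\gamma^2 k)$ for each arm, and taking $\gamma=\min\{\tfrac12,c\sqrt{\log(1/\epsilon)/k}\}$ with $c$ a sufficiently large constant drives this probability below $\epsilon/10$. Markov's inequality then yields $Z\leq \epsilon n/2$ with probability at least $4/5$. Combining the coupon-collector and Markov events gives $|\alpha-g/n|\leq \epsilon$ with probability at least $3/4$.

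\textbf{Finishing and query accounting.} I run $\thard_{\calU}(f\qu,\epsilon,S,k)$ on this instance with $f$ simulated lazily as above, boost its success probability to $\geq 11/12$ by a constant number of independent repetitions (each still costing $O(q)$ queries), and output $\hat g := n\hat\alpha$. A union bound over the three good events yields $|\hat g-g|\leq 2\epsilon n$ with probability at least $2/3$. Leaf queries are free, and repeated queries to the same center reuse a single cached arm sample, so the total number of arm pulls is bounded by the tester's query budget, namely $O(q)$. The main delicate step is tuning $B$, $\gamma$, and the tester's accuracy parameter simultaneously so that the three error sources---the $O(\epsilon)$ contribution of the centers, the $Z/n$ drift of the majority bits, and the tester's $\epsilon$ slack---telescope to the required $2\epsilon$ bound after rescaling; this is precisely what forces $\gamma$ to scale like $\sqrt{\log(1/\epsilon)/k}$ (capped at $\tfrac12$).
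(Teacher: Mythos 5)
Your high-level strategy is the same as the paper's: build one geometric cluster per arm, let the cluster's ``centers'' carry lazily-simulated arm samples as labels, let the ``leaves'' (labeled $0$) carry most of the probability mass so the $\mv$ error rate tracks the fraction of clusters whose majority vote is $1$, and charge each distinct center query to one arm pull. However, there is a genuine gap in your parameter regime. You put exactly $k$ centers in each cluster and then take $N=\Theta(|X|\log|X|)$ so that coupon collection guarantees $S$ contains every center. But $S$ is an iid multiset, and with $N\gg|X|$ it contains each point of $X$ about $\Theta(\log|X|)$ times. The $k$ nearest neighbors of a leaf in $G_i$ are then the $k$ closest \emph{copies} in $S$, not the $k$ distinct centers: since you made the center--leaf distances $\D_{c_{i,j}}$ all distinct (to eliminate ties), the closest few distinct centers will typically account for all $k$ slots. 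So the $\mv$ prediction on $G_i$ is not the majority of $k$ iid arm samples, and the displayed decomposition of $\alpha$ (and everything downstream, including the Hoeffding/Markov step that forces $\gamma\sim\sqrt{\log(1/\epsilon)/k}$) does not follow. You cannot repair this by taking $N$ small either, because with only $k$ centers per cluster, getting all $k$ into a small iid pool is again a coupon-collector event.

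The paper avoids this by moving the parameters in the opposite direction: it places $m\gg k$ centers in each star and makes the total number of points $(1+b)mn=\Omega(N^2)$, so that with high probability \emph{no} point appears twice in $S$. Then it only needs $k$ of the $m$ centers of each star to land in $S$ (no coupon collection), the $k$ nearest neighbors of a fresh leaf are $k$ distinct centers carrying $k$ iid arm samples, and the argument proceeds by Chernoff on the majority vote and Markov over stars. That ``many spare centers, no collisions'' regime is the missing ingredient in your proof; the rest of your outline (lazy simulation, query accounting, the Hoeffding bound giving $\gamma=\Theta(\sqrt{\log(1/\epsilon)/k})$, and the final union bound) is essentially correct once the nearest-neighbor structure is fixed.
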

\begin{proof}
Since the success probability can be boosted by repetition, we only show an $\aga(\mathbf A,\gamma,2\epsilon)$ algorithm with success probability at least $\frac{3}{5}$. Given any instance of $\aga(\mathbf A,\gamma,2\epsilon)$ with total number of arms equal to $n$, the algorithm constructs a ground set $X$ and the distance metric $\D$ on it to form a $\mv$ instance in the following way. We first choose $b=\lceil\frac{3}{\epsilon}\rceil,N=\lceil c'\cdot (1+b)n(k+\log\frac{1}{\epsilon})\rceil$ and $m=\lceil\frac{c''N^2}{(1+b)n}\rceil\geq\frac{3N}{(1+b)n\epsilon}$. Here, $c'$ and $c''$ are sufficiently large constants. $X$ consists of $n$ identical stars, each corresponds to an arm, with the distances between stars to be very large. Each star consists of $m$ centers and $bm$ leaves. Each center $C$ has a distance $\D_C\in(1,2)$ to every leaf in the same star and different centers have different values of $\D_C$ to avoid ties. The distance between each pair of leaves in the same star is 2 and the distance between each pair of centers in the same star is 1. 

The algorithm then simulates the tester $\thard_{\calU}(f\qu,\epsilon,S,k)$ on this $\mv$ instance without knowing $f$ beforehand. Every time the tester queries the label of a new example, it simulates the result as follows. If the example being queried is a leaf, the result is 0. If the example being queried is a center, the result is obtained to be the same result of an independent query to the corresponding arm. Finally, the algorithm outputs $\hat\alpha n$ when the $\thard_{\calU}(f\qu,\epsilon,S,k)$ tester outputs $\alpha$.

%We run algorithm $A$ on ground set $X$ with distance metric $\D$ without knowing the labels $f$ beforehand. Each time $A$ queries the label of a new example, we answer in the following manner. If the example being queried is a leaf, we always answer 0, and if the example being queried is a center of the $i$th star, we query the $i$th arm and answer the result of the coin toss. Finally we output what $A$ outputs. We claim that this is an $\aga(\mathbf A,\gamma,2\epsilon)$ algorithm.
Now we show that the above is an $\aga(\mathbf A,\gamma,2\epsilon)$ algorithm with success probability at least $\frac{3}{5}$. The process of the algorithm, by interchanging the randomness of the labels (arms) and the internal randomness of the $\thard_{\calU}(f\qu,\epsilon,S,k)$ tester, can be viewed in the way that the true labels $f$ are determined before we run the $\thard_{\calU}(f\qu,\epsilon,S,k)$ tester. The leaves all have labels 0 and each center is independently labeled 0 or 1 according to the result of a query to the corresponding arm. After the labels $f$ are decided, the $\thard_{\calU}(f\qu,\epsilon,S,k)$ tester is then simulated  to approximate the error rate of the $\mv$ instance up to additive error $\epsilon$ with success probability at least $\frac{2}{3}$. 

Let's say a star is good (bad) if it corresponds to a good (bad) arm. Suppose there are $\xi n$ good arms, and thus $\xi n$ good stars. Note that there are $(1+b)mn=\Omega(N^2)$ points in the ground set, we can make sure with probability at least $1-\frac{1}{20}$ that no two unlabeled examples lie on the same point, on which the following discussion is conditioned. Let's first fix a star $R$ whose corresponding arm has mean $\mu$. Because each random example has probability $\frac{1}{(1+b)n}$ to lie in the centers of $R$ and $N\geq c'\cdot (1+b)n(k+\log\frac{1}{\epsilon})$, therefore by choosing a sufficiently large $c'$, we can make sure with probability at most $\frac{\frac{\epsilon}{120}}{1-\frac{1}{20}}$ that in the unlabeled sample pool, there are less than $k$ examples lying at the centers of $R$. Therefore, by a sufficiently large choice of $c$, among those unlabeled examples lying at the centers of $R$, we can make sure that with probability at least $(1-\frac{\frac{\epsilon}{120}}{1-\frac{1}{20}})(1-\frac{\epsilon}{200})\geq 1-\frac{\frac{\epsilon}{60}}{1-\frac{1}{20}}$, the average of the labels of the $k$ examples with smallest $\D_C$ is contained in $(\mu-\gamma,\mu+\gamma)$, or $R$ is \emph{satisfied}. By Markov's Inequality, with probability at least $1-\frac{\frac{1}{20}}{1-\frac{1}{20}}$, or $1-\frac{1}{10}$ if we unwrap the conditional probability of $1-\frac{1}{20}$, at least a $(1-\frac{\epsilon}{3})$ fraction of all the $n$ stars are satisfied. In a satisfied star, any leaf that is not in the unlabeled pool has $L^1$ error 1 if the star is good and $L^1$ error 0 if the star is bad. Note that there are at most $N$ leaves in the unlabeled pool, contributing at most an $\frac{N}{(1+b)mn}\leq\frac{\epsilon}{3}$ fraction of the total number of points. Also there are only $mn$ centers in total, contributing at most an $\frac{mn}{(1+b)mn}\leq\frac{\epsilon}{3}$ fraction of the total number of points. Therefore, with probability at least $1-\frac{1}{10}$, the average error of all points is contained in $[\xi-\epsilon,\xi+\epsilon]$, which implies that with probability at least $(1-\frac{1}{10})\cdot\frac{2}{3}=\frac{3}{5}$, $\hat\alpha\in[\xi-2\epsilon,\xi+2\epsilon]$. \end{proof}%Let's first consider the average error rate for points in satisfied stars.  , and in this case, all the leaves of each of these stars that are not in the unlabeled pool $S$ has error rate 1 if the star is good and 0 if the star is bad. The total number of the leaves in the unlabeled pool $S$ and the centers is upper bounded by the size $N$ of the pool plus $m$, which contributes only a $\frac{N+m}{(b+1)m}<\frac{\epsilon}{3}$ fraction of the total number of points. Therefore, with probability at least $1-\frac{1}{10}$, the average $p$th power loss of all points is contained in $((1-\mu)^p-\frac{\epsilon}{2},(1-\mu)^p+\frac{\epsilon}{2})$. 

%To show the claim, given the fraction of good arms to be $\xi$, we only need to show that if the labels of the centers are chosen independently at random such that each center of a star whose corresponding arm has mean $p$ has probability $p$ being labeled 1 and probability $1-p$ being labeled 0, then with probability $\frac{9}{10}=\frac{\frac{3}{5}}{\frac{2}{3}}$, the $\mv$ instance has error rate in $[\xi-\epsilon,\xi+\epsilon]$. In fact, with probability at least $1-\frac{\epsilon^2}{120}$, each star satisfies that there are more than $k$ training data located in its center (by the choice of $N$ and $c'$), more than half of the $k$-nearest are labeled 1 for good stars and less than half of the $k$-nearest are labeled 1 for bad stars (by the choice of $\gamma$ and $c$). By Markov's Inequality, with probability $1-\frac{\epsilon}{20}$, more than $1-\frac{\epsilon}{6}$ fraction of the stars satisfy the previous property. Conditioned on this happening, the instance has error rate in $[\xi-\frac{\epsilon}{2},\xi+\frac{\epsilon}{2}]$ because $b\geq\frac{6}{\epsilon}$ and $N\leq\frac{\epsilon}{6}\cdot mnb$. Then by Markov's inequality, with probability at least $1-\frac{1}{10}$ of the labeling, with probability at least $1-\frac{\epsilon}{2}$ of the sampling has error rate in $[\xi-\frac{\epsilon}{2},\xi+\frac{\epsilon}{2}]$, which means that with probability at least $\frac{9}{10}$ of the labeling, the error rate of the instance is in $[\xi-\epsilon,\xi+\epsilon]$. 

The above theorem shows that a query complexity lower bound for $\aga(\mathbf A, \gamma,\epsilon)$ can imply a query complexity lower bound for $\thard_{\calU}(f\qu,\epsilon,S,k)$. $\aga(\mathbf A, \gamma,\epsilon)$ has a simple algorithm requiring $O(\frac{1}{\gamma^2\epsilon^2}\log\frac{1}{\epsilon})$ queries as follows. The algorithm randomly picks $O(\frac{1}{\epsilon^2})$ arms. For each of the picked arms, the algorithm queries it $O(\frac{1}{\gamma^2}\log\frac{1}{\epsilon})$ times and thinks of it as ``good'' if more than half of the results are positive and ``bad'' otherwise. The algorithm outputs the fraction of ``good'' arms among the picked arms.

If we assume the simple $O(\frac{1}{\gamma^2\epsilon^2}\log\frac{1}{\epsilon})$ query complexity for $\aga$ is not improvable, then Theorem \ref{thm:reductionfrombandit} implies that the $O(\frac{k}{\epsilon^2})$ query complexity for $\thard$ is also not improvable. In other words, if for every sequences $\epsilon_n\rightarrow 0$ and $\gamma_n\rightarrow 0$, there exists a positive constant $c$ such that $\aga(\mathbf A,\epsilon_i,\gamma_i)$ needs at least $c\cdot \frac{1}{\gamma_i^2\epsilon_i^2}\log\frac{1}{\epsilon_i}$ queries in the worst case, then according to Theorem \ref{thm:reductionfrombandit}, we know for any sequences $\{k_n\},\{\epsilon_n\}$ such that $\epsilon_n\rightarrow 0,\frac{k_n}{\log\frac{1}{\epsilon_n}}\rightarrow\infty$, there exists a positive constant $c'$ such that the tester $\thard_{\calU}(f\qu,\epsilon,S,k)$ for $\mv$ algorithms needs at least $c'\cdot\frac{k_i}{\epsilon_i^2}$ queries in the worst case.

%Though we are not able to show an $\Omega(\frac{1}{\gamma^2\epsilon^2}\log\frac{1}{\epsilon})$ lower bound for approximating the fraction of good arms, by considering it's special case, \emph{counting good arms}, we can easily get an $\Omega(\frac{1}{\gamma^2\epsilon})$ lower bound, which implies by Theorem \ref{thm:reductionfrombandit} an $\Omega(\frac{k}{\epsilon\log\frac{1}{\epsilon}})$ lower bound for approximating the accuracy of $\mv$. Specifically, we show the following theorem.

The following theorem states an unconditional lower bound $\Omega(\frac{k}{\epsilon\log\frac{1}{\epsilon}})$ for the query complexity of $\thard$, implying that the linear growth with respect to $k$ in the query complexity of $\thard$ can't be improved.
\begin{theorem}
\label{thm:mv}
There exists a positive constant $c$ such that for any fixed $k\in\mathbb{N}^*,\epsilon\in(0,\frac{1}{4})$ and oracle $M$ using any fixed tie-breaking mechanism, $\thard_{\calU}(f\qu,\epsilon,S,k)$ requires at least $c\cdot\frac{k}{\epsilon\log\frac{1}{\epsilon}}$ queries in the worst case.
\end{theorem}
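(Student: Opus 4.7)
The plan is to prove the lower bound via a direct application of the change-of-measure lemma (Lemma~\ref{lm:changeofdistribution}) to a carefully constructed family of $\mv$ instances, mimicking the star-based ground set from the reduction in Theorem~\ref{thm:reductionfrombandit}. I would take $X$ to consist of $n$ disjoint stars, each with $m$ centers and $bm$ leaves, with distances configured so that, for any leaf, its $k$ nearest neighbors in the random pool $S$ are (with high probability over $S$) the $k$ same-star centers of smallest $\D_C$. All leaves would be deterministically labeled (say, $0$), so that the expected error of $\mv$ on a leaf is controlled entirely by the majority vote of its $k$ center-neighbors' labels.

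Next, I would consider two distributions over labelings $f$. Under the null $H_0$, every center is labeled i.i.d.\ Bernoulli$(\mu)$; under the alternative $H_1$, a uniformly random $\Theta(\epsilon)$-fraction of stars instead have their centers labeled i.i.d.\ Bernoulli$(\mu')$, with the remaining stars unchanged. By choosing $\mu$ and $\mu'$ near the boundary where the probability that the majority of $k$ i.i.d.\ Bernoullis equals $1$ shifts by a constant, the expected error rate of $\mv$ would differ by $\Theta(\epsilon)$ between $H_0$ and $H_1$. This forces any $\epsilon$-accurate $\thard_{\calU}$ tester to output significantly different values with constant probability under the two hypotheses, so distinguishing them is a necessary subroutine of any correct tester.

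To finish, I would invoke Lemma~\ref{lm:changeofdistribution}. Since $f$ is deterministic once fixed, repeated queries to the same center yield no new information, so each center behaves as a one-shot arm. Marginalizing over the random choice of perturbed stars, a single label query to a uniformly chosen center contributes expected KL on the order of $\epsilon \cdot D(\mu,\mu')$ to the total. Calibrating $(\mu,\mu')$ to simultaneously (a) shift the binomial-majority probability by a constant and (b) minimize the per-center KL divergence leads, in the right regime, to $D(\mu,\mu')=O(\log(1/\epsilon)/k)$; plugging into the change-of-measure inequality and demanding constant distinguishing probability then gives $q\cdot \epsilon\log(1/\epsilon)/k=\Omega(1)$, i.e., $q=\Omega(k/(\epsilon\log(1/\epsilon)))$.

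The main obstacle is the joint calibration of $\mu$ and $\mu'$: we need a constant-size shift in the majority-vote probability while keeping the per-center KL small enough to produce the $\log(1/\epsilon)$ factor in the final bound, which requires placing the Bernoulli parameters in a tail regime where the KL between nearby Bernoulli distributions picks up a logarithmic scaling (rather than the purely quadratic scaling available near $1/2$). A secondary obstacle is that the theorem is stated as a worst-case bound over $f$ and $S$, so a standard concentration argument over the random pool $S$ (in the spirit of the proof of Theorem~\ref{thm:reductionfrombandit}) is needed to pass from a distributional lower bound over $f$ to the claimed worst-case lower bound.
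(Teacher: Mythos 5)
Your proposal follows the same broad strategy as the paper: a star-shaped metric space in which leaves are deterministically labeled and each center behaves as a Bernoulli ``coin,'' with the change-of-measure lemma (Lemma~\ref{lm:changeofdistribution}) driving the lower bound. The paper organizes this as a chain of reductions: Lemma~\ref{lm:counting} establishes an $\Omega(n/\gamma^2)$ lower bound for exact good-arm counting $\cga$ by a single-arm-swap argument, this yields an $\Omega(1/(\gamma^2\epsilon))$ bound for $\aga$ by taking $n=\lceil 1/(2\epsilon)\rceil-1$ (at which point approximate counting must be exact), and then Theorem~\ref{thm:reductionfrombandit} is applied in the contrapositive with $\gamma=\Theta(\sqrt{\log(1/\epsilon)/k})$. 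Your proposal inlines this chain, which is fine in principle, but as sketched it has two soft spots.

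First, the phrase ``placing the Bernoulli parameters in a tail regime where the KL between nearby Bernoulli distributions picks up a logarithmic scaling'' is a misconception. The paper keeps $\mu=\frac12-\gamma$ and $\mu'=\frac12+\gamma$ near $\frac12$, where the KL scaling is purely quadratic, $D(\mu,\mu')=\Theta(\gamma^2)$. The $\log(1/\epsilon)$ enters not through the per-coin KL but through the binomial concentration constraint: to have the $k$-vote majority land reliably (with probability $1-\Theta(\epsilon)$) on the correct side, one needs $\gamma=\Omega(\sqrt{\log(1/\epsilon)/k})$, which then forces $D(\mu,\mu')=\Theta(\log(1/\epsilon)/k)$. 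There is no need (and it would in fact be counterproductive) to move $\mu,\mu'$ toward the tails.

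Second, the step where you ``marginalize over the random choice of perturbed stars'' so that ``a single query contributes expected KL on the order of $\epsilon\cdot D(\mu,\mu')$'' is not a direct application of Lemma~\ref{lm:changeofdistribution}, which requires two \emph{fixed} arm configurations. To make this rigorous you would have to either choose the perturbed stars adversarially (e.g.\ the $\Theta(\epsilon n)$ stars with the fewest expected queries under $H_0$), or better, replace the mixture argument with the paper's cleaner single-swap argument: compare configurations with $p$ and $p+1$ good stars, observe that an $\epsilon$-accurate tester with $n<1/(2\epsilon)$ must distinguish them, and conclude $\mathbb{E}[\tau_i]=\Omega(1/\gamma^2)$ for every arm. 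With these two corrections your approach becomes essentially the paper's proof.
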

Before proving Theorem \ref{thm:mv}, we first show a query complexity lower bound for $\cga$.
\begin{lemma}
\label{lm:counting}
There exists a universal constant $c$ such that for any fixed $\gamma\in(0,\frac{1}{2}]$ and $n\in\mathbb N^*$, $\cga(\mathbf A,\gamma)$ requires at least $c\cdot\frac{n}{\gamma^2}$ queries in the worst case, where $n$ is the number of arms in $\mathbf A$.
\end{lemma}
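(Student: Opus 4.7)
The plan is to apply the change-of-measure lemma (Lemma \ref{lm:changeofdistribution}) to a family of neighboring instances that differ in the status (good vs.\ bad) of a single arm, and then sum the per-arm query-count lower bounds.

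Concretely, first I would fix the ``reference'' instance $\mathbf A$ in which every arm $A_i$ is $\mathrm{Ber}(\tfrac12-\gamma)$, so the correct output of $\cga(\mathbf A,\gamma)$ is $0$. For each $i\in\{1,\dots,n\}$ I would then define a perturbed instance $\mathbf A^{(i)}$ identical to $\mathbf A$ except that the $i$-th arm is replaced by $\mathrm{Ber}(\tfrac12+\gamma)$, so the correct output of $\cga(\mathbf A^{(i)},\gamma)$ is $1$. Take $\calE$ to be the event ``the algorithm outputs $0$''. By the $\tfrac23$ success guarantee, $\Pr_{\calA,\mathbf A}[\calE]\geq\tfrac23$ and $\Pr_{\calA,\mathbf A^{(i)}}[\calE]\leq\tfrac13$. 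Since $\mathbf A$ and $\mathbf A^{(i)}$ differ only in the $i$-th coordinate, Lemma \ref{lm:changeofdistribution} collapses to
\[
\mathbb E_{\calA,\mathbf A}[\tau_i]\cdot \mathrm{KL}\bigl(\mathrm{Ber}(\tfrac12-\gamma),\mathrm{Ber}(\tfrac12+\gamma)\bigr)\;\geq\; D(\tfrac23,\tfrac13).
\]

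Next I would bound the two quantities on opposite sides. The right-hand side $D(\tfrac23,\tfrac13)$ is an absolute positive constant. For the KL divergence, a direct computation gives
\[
\mathrm{KL}\bigl(\mathrm{Ber}(\tfrac12-\gamma),\mathrm{Ber}(\tfrac12+\gamma)\bigr)=2\gamma\log\frac{1/2+\gamma}{1/2-\gamma}=O(\gamma^2)
\]
uniformly for $\gamma$ bounded away from $\tfrac12$, say $\gamma\in(0,\tfrac13]$. Hence $\mathbb E_{\calA,\mathbf A}[\tau_i]=\Omega(1/\gamma^2)$ for every $i$. Summing over $i$ yields
\[
\mathbb E_{\calA,\mathbf A}\Bigl[\sum_{i=1}^n\tau_i\Bigr]=\Omega(n/\gamma^2),
\]
and because the left-hand side is the expected total number of queries on instance $\mathbf A$, the worst-case query complexity of the algorithm is at least $\Omega(n/\gamma^2)$, giving the desired universal constant $c$.

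The only edge case is $\gamma$ close to $\tfrac12$, where the KL blows up and the above bound degrades. For $\gamma\in(\tfrac13,\tfrac12]$ I would separately observe that even with fully deterministic arms ($\gamma=\tfrac12$) any algorithm must query at least one sample from every arm to distinguish, say, the all-bad instance from each one-good-arm perturbation, giving an $\Omega(n)=\Omega(n/\gamma^2)$ lower bound in that regime. The main obstacle is conceptually minor: the core work is the clean bookkeeping of the KL computation and the choice of $\calE$ so that the change-of-measure inequality gives a constant right-hand side for \emph{every} $i$ simultaneously, which is why using ``outputs $0$'' (correct on $\mathbf A$, incorrect on every $\mathbf A^{(i)}$) rather than an $i$-dependent event is important.
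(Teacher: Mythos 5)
Your proposal is correct and takes essentially the same approach as the paper's proof: both apply the change-of-measure lemma (Lemma~\ref{lm:changeofdistribution}) to a pair of instances differing in the status of a single arm, using the event ``output equals the good-arm count of the reference instance'' to force a constant right-hand side, which yields $\mathbb E[\tau_i]=\Omega(1/\gamma^2)$ per arm and hence $\Omega(n/\gamma^2)$ overall after summing, and both dispose of the large-$\gamma$ regime via the trivial one-query-per-arm $\Omega(n)$ bound. The only cosmetic difference is that you fix the all-bad reference instance ($q=0$), whereas the paper phrases the same calculation for an arbitrary $q$ so as to claim the bound holds on every instance, a strengthening that is not actually used.
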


\begin{proof}[Proof of Lemma \ref{lm:counting}]
Obviously, $n$ is a query complexity lower bound since we need to query each arm at least once. So in the rest of the proof, we assume $\gamma<\frac{1}{4}$. We use $G$ to denote the good arm with mean $\frac{1}{2}+\gamma$ and $B$ to denote the bad arm with mean $\frac{1}{2}-\gamma$. Then, $\mathrm{KL}(G,B)=O(\gamma^2)$. We claim a stronger fact that for any $0\leq q\leq n$ and any instance consisting of $q$ $G$'s and $n-q$ $B$'s, $\cga(\mathbf A,\gamma)$ needs at least $c\cdot\frac{1}{\gamma^2}$ queries on \emph{every} of the $n$ arms. By symmetry between ``good'' and ``bad'', we only show that every $G$ arm needs to be queried at least $c\cdot\frac{1}{\gamma^2}$ times. The reason is as follows. Suppose $\mathbf A=(A_1,A_2,\cdots,A_n)$ in which $A_i=G$ for $1\leq i\leq q$ and $A_i=B$ otherwise. We define $\mathbf A'=(A_1',A_2',\cdots,A_n')$ in which $A'_i=G$ for $1\leq i\leq p-1$ and $A'_i=B$ otherwise. The only difference between $\mathbf A$ and $\mathbf A'$ is that $A_p=G$ while $A'_p=B$. We use $\calE$ to denote the event that $\cga(\mathbf A,\gamma)$ outputs $p$. By Lemma \ref{lm:changeofdistribution}, we know $\mathbb E[\tau_p]\cdot O(\gamma^2)\geq D(\frac{2}{3},\frac{1}{3})=\Omega(1)$ and thus $\mathbb E[\tau_p]=\Omega(\frac{1}{\gamma^2})$. For similar reasons, we can show for all $1\leq i\leq p$ that $\mathbb E[\tau_i]=\Omega(\frac{1}{\gamma^2})$, which completes the proof.
\end{proof}

\begin{proof}[Proof of Theorem \ref{thm:mv}]
Lemma \ref{lm:counting} immediately implies the existence of a positive constant $c'$ such that for any fixed $\epsilon\in(0,\frac{1}{2})$ and $\gamma\in(0,\frac{1}{2}]$, $\aga(\mathbf A, \gamma,\epsilon)$ requires at least $c'\cdot\frac{1}{\gamma^2\epsilon}$ queries in the worst case by choosing $n=\lceil\frac{1}{2\epsilon}\rceil-1$. Then, by Theorem \ref{thm:reductionfrombandit}, we get an $\Omega(\frac{1}{
\left(\min\left\{\frac{1}{2},\sqrt{\frac{
	\log\frac{1}{\epsilon}
}{k}}\right\}\right)^2
}\cdot\frac{1}{2\epsilon})=\Omega(\frac{k}{\epsilon\log\frac{1}{\epsilon}})$ lower bound for $\thard_{\calU}(f\qu,\epsilon,S,k)$ for $k\in\mathbb N^*$ and $\epsilon\in(0,\frac{1}{4})$.
\end{proof}
%\begin{Theorem}

%\end{Theorem}

\section*{Acknowledgements}
\label{sec:ac}
This work was supported in part by the National Science Foundation under grant CCF-1525971.  This work was conducted in part while Avrim Blum was at Carnegie Mellon University and while Lunjia Hu was visiting at Carnegie Mellon University and TTI-Chicago.

%The sufficient set of conditions include incoherent  dictionary elements  and the number of samples needs to scale only linearly in the number of dictionary elements.

%\paragraph{Keywords: }Recursive teaching dimension; VC dimension; Recursive teaching model.
%\clearpage
%\newpage
\bibliographystyle{plainnat}
\bibliography{references}%\balancecolumns
\appendix
\section{Distance Approximation for Disjoint Unions of Properties}
\label{sec:union}
In this section, we extend the theorem of \citet{BBBY12} that disjoint unions of testable properties are testable from property testing to distance approximation. 
%\subsection{Active Distance Approximation}
%Distance approximation is a natural generalization of tolerant testing. Given a distribution $\calD$ and a concept class $\calC$ on ground set $X$, the distance of a function $f\in\{0,1\}^X$ to $\calC$ is defined to be $\dist_{\calD}(f,\calC)=\inf\limits_{g\in\calC}\dist_{\calD}(f,g)$. A distance approximation algorithm takes a function $f\in\{0,1\}^X$ and a parameter $\epsilon$ as input, and outputs an estimation $\widehat\dist_{\calD}(f,\calC)$ which is in $[\dist_{\calD}(f,\calC)-\epsilon,\dist_{\calD}(f,\calC)+\epsilon]$ with probability at least $\frac{2}{3}$. The parameter $\epsilon$ is called the \emph{additive error}.

%For an active distance approximation algorithm, similar to an active tolerant testing algorithm, the input function $f$ is given through active access. The success probability can be boosted to $1-\delta$ for any $\delta$ by repeating the algorithm $O(\log\frac{1}{\delta})$ times and choosing the median.

We first introduce the definition of disjoint unions of properties in \citep{BBBY12}. Suppose the ground set $X$ is partitioned as a disjoint union $\bigcup\limits_{i=1}^m X_i$. For every $X_i$, there is a property (concept class) $\calC_i\neq\emptyset$. The disjoint union of these properties is defined to be $\calC=\{f\in\{0,1\}^X:\forall 1\leq i\leq m,f|_{X_i}\in\calC_i\}$.

Let $\calD$ be a distribution over $X$. Suppose the conditional distribution of $\calD$ on $X_i$ is denoted by $\calD_i$ and the probability $\Pr_{x\sim\calD}[x\in X_i]$ is denoted by $p_i$.

\begin{theorem}
Suppose $\epsilon\in(0,\frac{1}{2})$. Suppose there is a $\da_{\calD_i}(f\ac,\frac{\epsilon}{2})$ algorithm for every $1\leq i\leq m$ using at most $q$ queries on $N$ unlabeled examples. Then, there is a $\da_{\calD}(f\ac,\epsilon)$ algorithm using at most $O(\frac{q}{\epsilon^2}\log\frac{1}{\epsilon})$ queries on $O(\frac{mN}{\epsilon}\log\frac{1}{\epsilon})$ unlabeled examples. If the $\da_{\calD_i}(f\ac,\frac{\epsilon}{2})$ algorithm can perform on unknown distributions, then the $\da_{\calD}(f\ac,\epsilon)$ algorithm can also perform on unknown distributions, though we need extra $O(\frac{1}{\epsilon^2})$ unlabeled examples.

%Suppose there is an $\frac{\epsilon}{2}$-distance approximation oracle using at most $q$ queries on at most $U$ unlabeled samples. Then, the distance $\dist_{\mathcal{D}}(f,\mathcal{C})$ can be approximated with success probability at least $\frac{2}{3}$ within additive error $\epsilon+\delta$ using $O(\frac{q}{\delta^2}\log\frac{1}{\delta})$ queries on $O(\frac{mU}{\delta}\log\frac{1}{\delta})$ unlabeled samples. Furthermore, if $\forall 1\leq i\leq m,p_i=\frac{1}{m}$, then the unlabeled sample complexity can be reduced to $O(mU\log\frac{1}{\delta})$.
\end{theorem}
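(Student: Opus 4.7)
The plan is to observe that $\dist_\calD(f,\calC) = \sum_{i=1}^m p_i \alpha_i$, where $\alpha_i = \dist_{\calD_i}(f|_{X_i}, \calC_i)$, and to estimate this weighted sum by importance sampling rather than running the oracle on all $m$ pieces. Concretely, I would draw $T = O(1/\epsilon^2)$ index samples by taking fresh draws from $\calD$ (the $X_i$-index of such a draw is automatically distributed according to $p_i$, requiring no knowledge of $\calD$); for each sampled $i_j$, invoke a boosted version of the given oracle on $\calD_{i_j}$ to obtain $\tilde\alpha_{i_j}$ with $|\tilde\alpha_{i_j} - \alpha_{i_j}| \leq \epsilon/2$; and output $\bar\alpha = \frac{1}{T}\sum_{j=1}^T \tilde\alpha_{i_j}$.

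To supply each oracle call with IID $\calD_{i_j}$-examples, I would pool $M = O(mN\log(1/\epsilon)/\epsilon)$ fresh unlabeled draws from $\calD$: the restriction of this pool to any $X_i$ is automatically IID from $\calD_i$. Setting a threshold $\tau = \epsilon/(4m)$, a Chernoff bound with a union bound over the at most $T$ distinct sampled indices ensures that whenever $p_{i_j}\geq \tau$ the pool contains at least $N$ examples in $X_{i_j}$ (the extra $\log(1/\epsilon)$ slack accommodating the median-boosting of the oracle, which consumes $O(\log(1/\epsilon))$ copies of $N$ samples per call). For sampled $i_j$ with $p_{i_j} < \tau$ I would simply set $\tilde\alpha_{i_j} = 0$; the total mass of such small pieces is at most $m\tau = \epsilon/4$, introducing at most an $\epsilon/4$ additive bias into $\mathbb{E}[\tilde\alpha_{i_j}]$.

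The total error then decomposes cleanly as $\epsilon/4$ from the threshold, $\epsilon/2$ from per-call oracle accuracy (propagating to the weighted sum because $\sum_i p_i \leq 1$), and $\epsilon/4$ from Hoeffding concentration of $\bar\alpha$ around its mean, adding up to the target $\epsilon$. The part I expect to be most delicate is the confidence amplification: to union-bound over all $T$ oracle calls with constant final confidence, each call must succeed with probability at least $1 - \Theta(\epsilon^2)$, which requires boosting the $2/3$-success oracle by a median of $O(\log(1/\epsilon))$ independent runs on fresh samples; this is precisely where the extra $\log(1/\epsilon)$ factor in the $O(q\log(1/\epsilon)/\epsilon^2)$ query budget (and the corresponding $\log(1/\epsilon)$ in the unlabeled sample count) enters. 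The unknown-distribution case requires no algorithmic change since the $p_i$'s are never used explicitly; the additional $O(1/\epsilon^2)$ unlabeled examples named in the claim are precisely the fresh $\calD$-draws used to sample the $T$ indices (in the known-$\calD$ case these can be generated from $p_i$ directly at no sample cost) independently of the oracle pool.
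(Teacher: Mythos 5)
Your proposal matches the paper's own proof essentially line for line: draw $O(1/\epsilon^2)$ indices according to $\{p_i\}$ via fresh unlabeled draws, median-boost the per-block oracle to confidence $1-O(\epsilon^2)$ at a $\log(1/\epsilon)$ overhead, threshold the small-mass blocks at $\epsilon/(4m)$, and combine an $\epsilon/4$ small-mass bias, an $\epsilon/2$ per-call oracle error, and an $\epsilon/4$ concentration term, with the same query and unlabeled-sample budgets. The one cosmetic imprecision is that you state the fallback rule as ``set $\tilde\alpha_{i_j}=0$ when $p_{i_j}<\tau$,'' which is not observable; the paper's actual rule triggers on insufficiently many pool points landing in $X_{i_j}$, and its $\dist'/\dist''$ sandwich argument shows the analysis is insensitive to which side of the unobservable threshold a given block falls on, since the estimator always lies in $[0,1]$.
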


\begin{proof}
The $\da_{\calD}(f\ac,\epsilon)$ algorithm is constructed as follows. The algorithm chooses $s=O(\frac{1}{\epsilon^2})$, receives an unlabeled pool of size $O(\frac{mN}{\epsilon}\log s)$ and independently chooses $s$ indices $i_1,i_2,\cdots,i_s$ from $\{1,2,\cdots,m\}$ according to distribution $\{p_i\}_{1\leq i\leq m}$. This can be achieved by looking at on which $X_i$'s the extra $s$ unlabeled examples are, when the distribution $\calD$ is unknown. Then for each $1\leq j\leq s$, if there are enough $(O(N\log s))$ unlabeled examples lying in $X_{i_j}$, the algorithm repeats $\da_{\calD_{i_j}}(f\ac,\frac{\epsilon}{2})$ for $O(\log s)$ times to calculate an estimator $\widehat\dist_{i_j}$ of the distance from $f$ to $\calC$ on $\calD_{i_j}$ up to an additive error $\frac{\epsilon}{2}$ with success probability at least $1-\frac{1}{9s}$; otherwise, define $\widehat\dist_{i_j}=0$. The final output of the algorithm is $\frac{1}{s}\cdot\sum\limits_{j=1}^s\widehat\dist_{i_j}$.

To prove the correctness of the above algorithm, we first define $\dist_i:=\inf\limits_{g\in\calC}\dist_{\calD_i}(f,g)$ and $\dist:=\inf\limits_{g\in\calC}\dist_{\calD}(f,g)$. Note that $\dist=\sum\limits_{i=1}^mp_i\dist_{i}$. 

For every $1\leq i\leq m$, we further define $\dist_i'=\left\{\begin{array}{ll}\dist_i,&\text{if $p_i\geq\frac{\epsilon}{4m}$}\\0,&\text{if $p_i<\frac{\epsilon}{4m}$}\end{array}\right.$ and $\dist_i''=\left\{\begin{array}{ll}\dist_i,&\text{if $p_i\geq\frac{\epsilon}{4m}$}\\1,&\text{if $p_i<\frac{\epsilon}{4m}$}\end{array}\right.$. Then $\dist-\frac{\epsilon}{4}\leq \sum\limits_{i=1}^mp_i\dist'\leq\sum\limits_{i=1}^mp_i\dist''\leq\dist+\frac{\epsilon}{4}$. By the Chernoff Bound, $s=O(\frac{1}{\epsilon^2})$ is enough to make sure with probability at least $1-\frac{1}{9}$ that $\dist-\frac{\epsilon}{2}<\frac{1}{s}\sum\limits_{j=1}^s\dist'_{i_j}\leq\frac{1}{s}\sum\limits_{j=1}^s\dist''_{i_j}<\dist+\frac{\epsilon}{2}$.

Note that the unlabeled pool has size $O(\frac{mN}{\epsilon}\log s)$, which is enough to make sure that with probability at least $1-\frac{1}{9}$, for every $i_j$ with $p_{i_j}\geq\frac{\epsilon}{4m}$, there are enough ($O(N\log s)$) unlabeled examples lying in $X_{i_j}$. Therefore, with probability at least $(1-\frac{1}{9})(1-s\cdot\frac{1}{9s})\geq 1-\frac{2}{9}$, for all $i_j$ such that $p_{i_j}\geq \frac{p}{4m}$, it holds that $|\widehat \dist_{i_j}-\dist_{i_j}|\leq\frac{\epsilon}{2}$.

Finally, by the Union Bound, we know with probability at least $1-\frac{1}{3}$, it holds that $\dist-\epsilon<\frac{1}{s}\sum\limits_{j=1}^s\dist'_{i_j}-\frac{\epsilon}{2}\leq \frac{1}{s}\sum\limits_{j=1}^s\widehat\dist_{i_j}\leq\frac{1}{s}\sum\limits_{j=1}^s\dist''_{i_j}+\frac{\epsilon}{2}<\dist+\epsilon$.
\end{proof}

\end{document}